\def\b{\ensuremath\boldsymbol}
\newtheorem{definition}{Definition}
\newtheorem{proposition}{Proposition}
\begin{document}

\AddToShipoutPictureBG*{%
  \AtPageUpperLeft{%
    \setlength\unitlength{1in}%
    \hspace*{\dimexpr0.5\paperwidth\relax}
    \makebox(0,-0.75)[c]{\normalsize Published in Machine Learning with Applications, Elsevier, Volume 6, Pages 100088, 2021}
    }}

%
\title{Quantile-Quantile Embedding for Distribution Transformation and Manifold Embedding with Ability to Choose the Embedding Distribution}
%
%
%

\author{Benyamin Ghojogh$^1$,
        Fakhri Karray$^2$,
        Mark Crowley$^1$ \\ 
        \hfil \break 
        $^1$Machine Learning Laboratory, Department of ECE, University of Waterloo, Canada \\
        $^2$Centre for Pattern Analysis and Machine Intelligence, Department of ECE, University of Waterloo, Canada \\
        Emails: \{bghojogh, karray, mcrowley\}@uwaterloo.ca
        }

%
%

\markboth{}%
{Shell \MakeLowercase{\textit{et al.}}: Bare Demo of IEEEtran.cls for IEEE Journals}
%



\maketitle

\begin{abstract}
We propose a new embedding method, named Quantile-Quantile Embedding (QQE), for distribution transformation and manifold embedding with the ability to choose the embedding distribution. QQE, which uses the concept of quantile-quantile plot from visual statistical tests, can transform the distribution of data to any theoretical desired distribution or empirical reference sample. Moreover, QQE gives the user a choice of embedding distribution in embedding the manifold of data into the low dimensional embedding space. It can also be used for modifying the embedding distribution of other dimensionality reduction methods, such as PCA, t-SNE, and deep metric learning, for better representation or visualization of data. We propose QQE in both unsupervised and supervised forms. QQE can also transform a distribution to either an exact reference distribution or its shape. We show that QQE allows for better discrimination of classes in some cases. Our experiments on different synthetic and image datasets show the effectiveness of the proposed embedding method.
\end{abstract}

\begin{IEEEkeywords}
Quantile-Quantile Embedding (QQE), quantile-quantile plot,
distribution transformation,
manifold embedding,
embedding distribution,
class discrimination
\end{IEEEkeywords}

%
\IEEEpeerreviewmaketitle

\section*{Highlights}

\begin{itemize}
\item Transforming distribution of synthetic and image data to reference distributions
\item Embedding classes of digit, facial, and histopathology data to desired distributions
\item Manifold embedding of toy and real data to any desired distributions chosen by user
\item Distribution transformation is evaluated by measures for comparison of distributions
\end{itemize}

\section{Introduction}\label{section_introduction}

Regardless of the data science or machine learning task, there is no doubt that the distribution of the available data instances is highly relevant to the final outcome of any algorithm.
This distribution may be a standard distribution, such as a Gaussian, or it may be some other more exotic distribution or it may be unknown to us entirely.
Now, while pre-processing of data to remove noise, normalize, or otherwise adjust the data to be appropriate for a given algorithm is very common, imagine that we know, or suspect, that the distribution of the given data may not be suitable for the target purpose.
The reason for this could be anything, including knowledge about bias in the datasets, an incompatible distribution type for maximizing class discrimination or for representation and interpretation reasons. 
In these cases, we suggest it would be useful to be able to transform the distribution of dataset, as a pre-processing step, to a known distribution form while maintaining the original local relationships and distances between data instances so that the unique character of the dataset is maintained \cite{saul2003think}. What we present here is a general approach for doing just that, and we include a number of variants and experimental demonstrations of the utility and effectiveness of the approach.

For this distribution transformation, one could try to make all moments of data equal to the moments of the desired distribution \cite{gretton2007kernel,gretton2012kernel} but this quickly becomes computationally expensive. 
Furthermore, moments of non-standard distributions can be hard to compute in some cases. 
Another problem with simply matching all moments is that it results in transformation to the \textit{exact} desired distribution rather than the ``shape'' of the desired distribution which is more desirable. 
Note that transformation of data to the shape of another distribution means that the general shape of the Probability Density Function (PDF) of data becomes similar to the desired PDF regardless of the mean and scale of the distribution. 
One could also imagine, that the desired distribution may only be known indirectly via the distribution of some other empirical reference sample.
Thus, our method for distribution transformation should support providing a desired distribution either as a theoretical PDF/Cumulative Distribution Function (CDF) or as an empirical reference sample. 

A further connection for our proposed approach would be allowing a choice of distribution of embedded data in the field of manifold learning and dimensionality reduction. That is, we might want to choose what distribution the data instances will have after being embedded by a dimensionality reduction method. 
In dimensionality reduction, the choice of embedding distribution is usually not given to the user but it is very relevant as some dimensionality reduction methods already make assumptions about the distribution of neighbors of data points.
Meanwhile, other methods do not even make any assumption on the embedding distribution and yet do not give any choice of embedding distribution to the user. 
We will enumerate some examples for these methods in Section \ref{section_related_work}.

In this paper, we propose a new embedding method, named Quantile-Quantile Embedding (QQE), which can be used for distribution transformation and manifold learning with a user-specified choice of embedding distribution. 
The features and advantages of QQE are summarized as follows:
\begin{enumerate}
\item Distribution transformation to a desired distribution either as a PDF/CDF or an empirical reference distribution given by user. 
The entire dataset can be transformed in an unsupervised manner or every class in the dataset can be transformed in  a supervised manner. 
\item Manifold embedding of high dimensional data into a lower dimensional embedding space with the choice of embedding distribution by the user. 
Manifold embedding in QQE can also modify the embedding of other manifold learning methods, such as Principal Component Analysis (PCA), Fisher Discriminant Analysis (FDA), Student-t distributed Stochastic Neighbor Embedding (t-SNE), Locally Linear Embedding (LLE), and deep metric learning, for better discrimination of classes or better representation/visualization of data. 
\item For both distribution transformation and manifold embedding tasks, the distribution can be transformed to either the exact desired distribution or merely the shape of it. One of the many applications of exact distribution transformation is separation of classes in data. 
\end{enumerate}

The remainder of this paper is organized as follows. We review the related work in Section \ref{section_related_work}. Section \ref{section_qq_plot} introduces the technical background on quantile functions, the univariate quantile-quantile plot, and its multivariate version. In Section \ref{section_qq_embedding}, we propose the QQE method for both distribution transformation and manifold embedding. The experimental results are reported in Section \ref{section_experiments}. Finally, Section \ref{section_conclusion} concludes the paper and enumerates the future directions.

\section{Related Work}\label{section_related_work}

\subsection{Methods for Difference of Distributions}\label{section_related_work_measure_difference_distributions}

An obvious connection exists between this work and the task of computing the difference between two distributions which is a rich field in statistics. 
One of the most well-known methods is the Kullback-Leibler (KL) divergence \cite{kullback1951information}. KL-divergence, which is a relative entropy from one distribution to the other one, has been widely used in deep learning \cite{goodfellow2016deep}.
Another measure for difference of distributions of two random variables is
Maximum Mean Discrepancy (MMD) or kernel two-sample test. It is a measure of difference of two distributions by comparing their moments \cite{gretton2007kernel,gretton2012kernel}. This comparison of moments can be performed after pulling data to the feature space using kernels \cite{hofmann2008kernel}. MMD uses distances in the feature space \cite{scholkopf2001kernel}.
It has been used in machine learning algorithms such as generative moment matching networks \cite{li2015generative,ren2016conditional}. 
Another measure for measuring the relation of two random variables is Hilbert-Schmidt Independence Criterion (HSIC) \cite{gretton2005measuring}. Calculating the dependence of two random variables is difficult while calculating the linear dependence, named correlation, is much simpler. Therefore, for computation of dependence of two random variables, HSIC pulls data to the feature space using kernels \cite{hofmann2008kernel} and then computes the correlation between them in that space. This correlation is a good estimate for the dependence in the input space. Two example uses of HSIC in machine learning are supervised PCA \cite{barshan2011supervised} and supervised guided LLE \cite{alipanahi2011guided}.
Note that the formulas of the three introduced methods for measuring the difference of distributions will be provided in Section \ref{section_Quantitative_Measures}. We have used these measures for quantitatively discussing the results of QQE algorithm. 

\subsection{Quantile Plots for Visual Statistical Tests}

The quantile function for a distribution is defined as the inverse of the CDF \cite{parzen1979nonparametric,hyndman1996sample}. If we plot the quantile function, we will have the quantile plot \cite{galton1885application}. There are multivariate versions of quantile plots \cite{chaudhuri1996geometric} where data instances are multivariate rather than univariate. In case there are two sets of data instances with two distributions, one can match the quantile plots of these two datasets and have the quantile-quantile plot or qq-plot \cite{loy2016variations}. Using the qq-plot, statisticians can visually test whether the two distributions are equal and if not, how different they are \cite{oldford2016self,loy2016variations}. There also exist multivariate versions of qq-plot such as fuzzy qq-plot \cite{easton1990multivariate}. These multivariate qq-plots can be used for visual assessment of whether two distributions match or not. The technical required background on quantile plot and qq-plot are provided in Section \ref{section_qq_plot}. 

\subsection{Embedding Distribution in Manifold Learning Methods}

Some manifold learning and dimensionality reduction methods make an assumption about the distribution of neighbors of data points. For example, Stochastic Neighbor Embedding (SNE) and t-SNE take the Gaussian distribution \cite{hinton2003stochastic} and Cauchy \cite{maaten2008visualizing} (or Student-t \cite{van2009learning}) distribution for the neighborhood of points, respectively. These methods make some strong assumptions about the neighborhood of points and do not give freedom of choice to the user for the embedding distribution. Some manifold learning methods, however, do not even make any assumption about the embedding distribution and yet do not give any choice of embedding distribution to the user. Some examples are PCA \cite{ghojogh2019unsupervised}, Multi-dimensional Scaling (MDS) \cite{cox2008multidimensional}, Sammon mapping \cite{sammon1969nonlinear}, FDA \cite{ghojogh2019fisher}, Isomap \cite{tenenbaum2000global}, LLE \cite{roweis2000nonlinear,saul2003think}, and deep manifold learning \cite{he2016deep,schroff2015facenet}. Note that some of these methods make assumptions but not as a distribution for the embedding. For example, FDA assumes the Gaussian distribution for data in the input space and LLE assumes just unit covariance and zero mean for the embedded data.

\section{Quantile and Quantile-Quantile Plots}\label{section_qq_plot}

\subsection{Quantile Function and Quantile Plot}

The \textit{quantile function} for a distribution is defined as \cite{parzen1979nonparametric,hyndman1996sample}:
\begin{align}\label{equation_quantile_univariate_1}
Q(p) := F^{-1}(p) := \mathrm{inf}\{x~ |~ F(x) \geq p\}, 
\end{align}
where $p \in [0,1]$ is called \textit{position} and $F(x)$ is the CDF.
The quantile function can also be defined as:
\begin{align}\label{equation_quantile_univariate_2}
Q(p) := \arg \min_{\theta \in \mathbb{R}} \mathbb{E}\big[|X - \theta| + (2p-1)(X-\theta)\big],
\end{align}
where $X$ is a random variable with $\mathbb{E}[X] \!<\! \infty$ \cite{ferguson1967mathematical,serfling2004nonparametric}.
The two-dimensional plot $(p, Q(p))$ is called the \textit{quantile plot} which was first proposed by Sir Francis Galton \cite{galton1885application}. Its name was \textit{ogival curve} primarily as it was like an ogive because of the normal distribution of his measured experimental sample. 

If we have a drawn sample, with sample size $n$ from a distribution, the quantile plot is a \textit{sample (or empirical) quantile}. The sample quantile plot is $(p_i, Q(p_i)), \forall i \in \{1, \dots, n\}$.
For the sample quantile, we can determine the $i$-th position, denoted by $p_i$, as:
\begin{align}
p_i := \frac{i - \alpha}{n - \alpha - \beta + 1},
\end{align}
where different values for $\alpha$ and $\beta$ result in different positions \cite{leon1984another}. The simplest type of position is $p_i = i / n$ (with $\alpha=\beta=0$) \cite{parzen1979nonparametric}. The most well-known position is $p_i = (i-0.5)/n$ (with $\alpha=0.5, \beta=0$) \cite{allen1914storage}. However, it is suggested in \cite{hyndman1996sample} to use $p_i = (i-1/3)/(n + 1/3)$ (with $\alpha=\beta=1/3$) which is median unbiased \cite{reiss2012approximate}. 
It is noteworthy that Galton also suggested that we can measure the quantile function only in $p \in \{0.02, 0.09, 0.25, 0.50, 0.75, 0.91, 0.98\}$ as a summary \cite{galton1874proposed}. His summary is promising only for the normal distribution; however, with the power of today's computers we can compute the sample quantile with fine steps.

For the multivariate quantile plot, \textit{spatial rank} fulfills the role played by position in the univariate case.
Spatial rank $\b{u}_i \in \mathbb{R}^d$ of $\b{x}_i \in \mathbb{R}^d$ with respect to the sample $\{\b{x}_j\}_{j=1}^n$ is defined as \cite{mottonen1995multivariate,marden2004positions,serfling2004nonparametric,dhar2014comparison}:
\begin{align}\label{equation_spatial_rank}
\b{u}_i := \frac{1}{n} \sum_{j=1, j \neq i}^n \frac{\b{x}_i - \b{x}_j}{\|\b{x}_i - \b{x}_j\|_2},
\end{align}
whose term in the summation is a generalization of the sign function for the multivariate vector \cite{marden2004positions}. 
Eq. (\ref{equation_quantile_univariate_2}) can be restated as $\arg \min_{\theta} \mathbb{E}(|X - \theta| + u\,(X-\theta))$ where $[-1,1] \ni u := 2p-1$ \cite{chaudhuri1996geometric}. 
The multivariate \textit{spatial quantile} (or \textit{geometrical quantile}) for the multivariate spatial rank $\b{u} \in \mathbb{R}^d$ is defined as: 
\begin{align}
\b{Q}(\b{u}) := \arg \min_{\b{\theta} \in \mathbb{R}^d} \mathbb{E}(\Phi(\b{u}, \b{x}-\b{\theta}) - \Phi(\b{u}, \b{x})),
\end{align}
where $\b{x} \in \mathbb{R}^d$ is a random vector, $\Phi(\b{u},\b{t}) := \|\b{t}\|_2 + \b{u}^\top \b{t}$, and $\b{u}$ is a vector in unit ball, i.e., $\b{u} \in \{\b{v} ~|~ \b{v} \in \mathbb{R}^d, \|\b{v}\|_2 < 1\}$
\cite{chaudhuri1996geometric,serfling2004nonparametric,dhar2014comparison}.

\subsection{Quantile-Quantile Plot}


Assume we have two quantile functions for two univariate distributions. If we match their positions and plot $(Q_1(p), Q_2(p)), \forall p \in [0,1]$, we will have \textit{quantile-quantile plot} or \textit{qq-plot} in short \cite{loy2016variations}. Again, this plot can be an empirical plot, i.e., $(Q_1(p_i), Q_2(p_i)), \forall i \in \{1, \dots, n\}$. Note that the qq-plot is equivalent to the quantile plot for the uniform distribution as we have $Q(p) = p$ in this distribution. Usually, as a statistical test, we want to see whether the first distribution is similar to the second empirical or theoretical distribution \cite{loy2016variations}; therefore, we refer to the first and second distributions as the \textit{observed and reference distributions}, respectively \cite{easton1990multivariate}.
Note that if the qq-plot of two distributions is a line with slope $1$ (angle $\pi/4$) and intercept $0$, the two distributions have the same distributions \cite{oldford2016self}. The slope and the intercept of the line show the difference of spread and location of the two distributions \cite{loy2016variations}. 


In order to extend the qq-plot to multivariate distributions, we can consider the marginal quantiles. However, this fails to take the dependence of marginals into account \cite{dhar2014comparison,easton1990multivariate}. There are several existing methods for a promising generalization. One of these methods is \textit{fuzzy qq-plot} \cite{easton1990multivariate} (note that it is not related to fuzzy logic). In a fuzzy qq-plot, a sample of size $n$ is drawn from the reference distribution and the data points of the two samples are matched using optimization. An affine transformation is also applied to the observed sample in order to have an invariant comparison to the affine transformation. 
In the multivariate qq-plot, the matched data points are used to plot the qq-plots for every component; therefore, we will have $d$ qq-plots where $d$ is the dimensionality of data. Note that these plots are different from the $d$ qq-plots for the marginal distributions.
The technical details of fuzzy qq-plot is explained in the following.

\subsection{Multivariate Fuzzy Quantile-Quantile Plot}

Assume we have a dataset with size $n$ and dimensionality $d$, i.e., $\{\b{x}_i \in \mathbb{R}^d \}_{i=1}^n$. We want to transform its distribution as $\b{x}_i \mapsto \b{y}_i, \forall i \in \{1, \dots, n\}$.
We draw a sample $\{\b{y}_i \in \mathbb{R}^d \}_{i=1}^n$ of size $n$ from the desired (reference) distribution. 
Note that in case we already have a reference sample $\{\b{y}_i \in \mathbb{R}^d \}_{i=1}^m$ rather than the reference distribution, we can employ bootstrapping or oversampling if $m>n$ and $m<n$, respectively, to have $m=n$.
We match the data points $\{\b{x}_i\}_{i=1}^n$ and $\{\b{y}_i\}_{i=1}^n$ \cite{easton1990multivariate}:
\begin{equation}\label{equation_matching_optimization}
\begin{aligned}
& \underset{\b{A},\b{b},\sigma}{\text{minimize}}
& & \sum_{i=1}^n \|\b{x}_i - \b{A}\b{y}_{\sigma(i)} - \b{b}\|_2^2,
\end{aligned}
\end{equation}
where $\b{A} \in \mathbb{R}^{d \times d}$ and $\b{b} \in \mathbb{R}^d$ are used to make the matching problem invariant to affine transformation. If $\mathcal{P}$ is the set of all possible permutations of integers $\{1, \dots, n\}$, we have $\sigma \in \mathcal{P}$. 
This optimization problem finds the best permutation regardless of any affine transformation. 
Note that one can exchange $\b{x}$ and $\b{y}$ in Eq. (\ref{equation_matching_optimization}) and the following equations to match $\b{x}$ with $\b{y}$. As long as matching is performed correctly, that is fine. 

In order to solve this problem, we iteratively switch between solving for $\b{A}$, $\b{b}$, and $\sigma$ until there is no change in $\sigma$ \cite{easton1990multivariate}. Given $\b{A}$ and $\b{b}$, we solve:
\begin{equation}\label{equation_QQE_fuzzy_qqplot_assignment_problem}
\begin{aligned}
&\underset{\sigma}{\text{min.}} \sum_{i=1}^n \|\b{x}_i - \b{A}\b{y}_{\sigma(i)} - \b{b}\|_2^2 \equiv \underset{\b{\Psi}}{\text{min.}} \sum_{i=1}^n \sum_{j=1}^n \b{C}(i,j) \b{\Psi}(i,j),
\end{aligned}
\end{equation}
which is an assignment problem and can be solved using the Hungarian method \cite{kuhn1955hungarian}. $\b{C} \in \mathbb{R}^{n \times n}$ and $\b{\Psi} \in \mathbb{R}^{n \times n}$ are the cost matrix and a matrix with only one $1$ in every row, respectively. Note that $\b{\Psi}(i,j) = 1$ means that $\b{x}_i$ and $\b{y}_j$ are matched. $\b{C}$ should be computed before solving the optimization where $\b{C}(i,j) := \|\b{x}_i - \b{A}\b{y}_{j} - \b{b}\|_2^2$.

According to the $1$'s in the obtained $\b{\Psi}$, we have $\sigma$. Then given $\sigma$, we solve:
\begin{equation}
\begin{aligned}
& \underset{\b{A},\b{b}}{\text{minimize}}
& & \sum_{i=1}^n \|\b{x}_i - \b{A}\b{y}_{\sigma(i)} - \b{b}\|_2^2,
\end{aligned}
\end{equation}
which is a multivariate regression problem. 
The solution is \cite{hastie2009elements}:
\begin{align}
\mathbb{R}^{(d+1) \times d} \ni \b{\beta} := (\breve{\b{Y}}^\top \breve{\b{Y}})^{-1} \breve{\b{Y}}^\top \breve{\b{X}},
\end{align}
where $\mathbb{R}^{n \times (d+1)} \ni \breve{\b{Y}} := \big[[\b{y}_{\sigma(1)}, \dots, \b{y}_{\sigma(n)}]^\top, \b{1}_{n \times 1}\big]$ and $\mathbb{R}^{n \times d} \ni \breve{\b{X}} := \b{X}^\top = [\b{x}_1, \dots, \b{x}_n]^\top$.
We will have $\b{\beta} = [\b{A}, \b{b}]^\top$. Therefore, $\b{A}$ and $\b{b}$ are found where $\b{A}^\top$ is the top $d \times d$ sub-matrix of $\b{\beta}$ and $\b{b}^\top$ is the last row of $\b{\beta}$.

Note that it is better to set the initial rotation matrix to the identity matrix, i.e. $\b{A}^{(0)} = \b{I}$, to reduce the amount of assignment rotation. In this way, only a few iterations will suffice to solve the matching problem. 
This iterative optimization gives us the matching $\sigma$ and the samples $\{\b{x}_i\}_{i=1}^n$ and $\{\b{y}_i\}_{i=1}^n$ are matched. Then, we have $d$ qq-plots, one for every dimension. These qq-plots are named fuzzy qq-plots \cite{easton1990multivariate}.
Considering the spatial ranks, the quantiles are \cite{dhar2014comparison}:
\begin{alignat}{2}
& \b{Q}_X(\b{u}_i) = \b{x}_i,  &&\forall i \in \{1, \dots, n\}, \label{equation_Q_x} \\
& \b{Q}_Y(\b{u}_i) = \b{y}_{\sigma(i)}, ~~~ &&\forall i \in \{1, \dots, n\}. \label{equation_Q_y}
\end{alignat}

\section{Quantile-Quantile Embedding}\label{section_qq_embedding}

In QQE, we want to transform data instances from one dataset $\{\b{x}_i^0\}_{i=1}^n$ to another dataset $\{\b{x}_i\}_{i=1}^n$ where distribution is transformed to a desired target distribution while the local relationships between points are preserved as much as possible.
Formally, we define the task of distribution transformation as follows:
\begin{definition}[distribution transformation]
For a sample $\{\b{x}^0_i\}_{i=1}^n$ of size $n$ in $\mathbb{R}^d$ space, the mapping $\b{x}^0_i \mapsto \b{x}_i, \forall i \in \{1, \dots, n\}$ is a distribution transformation where the distribution of $\{\b{x}_i\}_{i=1}^n$ is the known desired distribution and the local distances of nearby points in $\{\b{x}^0_i\}_{i=1}^n$ are preserved in $\{\b{x}_i\}_{i=1}^n$ as much as possible.
\end{definition}

Distribution transformation can be performed in two ways:
(i) the distribution of data is transformed to the ``exact'' reference distribution, and
(ii) only the ``shape'' of the reference distribution is considered to transform to. 
In the following subsections, we detail these two approaches then we introduce a manifold embedding variation, and finally explain the use of unsupervised and supervised approaches for QQE.  

\subsection{Distribution Transformation to Exact Reference Distribution}\label{section_distribution_transform_exact}

QQE can be used for transformation of data to some exact reference distribution where all moments of the data become equal to the moments of the reference distribution. We start with an initial sample $\{\b{x}^0_i\}_{i=1}^n$ and transform it to $\{\b{x}_i\}_{i=1}^n$ whose distribution is desired to be the same as the distribution of a reference sample $\{\b{y}_{\sigma(i)}\}_{i=1}^n$ or a reference distribution.
For this, we consider the fuzzy qq-plot of $\{\b{x}_i\}_{i=1}^n$ and $\{\b{y}_{\sigma(i)}\}_{i=1}^n$.
When the $d$ qq-plots are obtained by the fuzzy qq-plot, we can use them to embed the data for distribution transformation.
Therefore, the qq-plot of every dimension should be a line with slope one and intercept zero \cite{oldford2016self}. 
Let $Q_l(\b{u}_i) \in \mathbb{R}$ denote the $l$-th dimension of $\mathbb{R}^d \ni \b{Q}(\b{u}_i) = [Q_1(\b{u}_i), \dots, Q_d(\b{u}_i)]^\top$ which is used for the $i$-th data point in the $l$-th qq-plot.
Consider $Q_l(\b{u}_i)$ for the matched data and the reference sample, denoted by $Q_{X, l}(\b{u}_i)$ and $Q_{Y, l}(\b{u}_i)$, respectively. 
In order to have the line in the qq-plot, we should minimize $\sum_{i=1}^n \sum_{l=1}^d \big(Q_{X, l}(\b{u}_i) - Q_{Y, l}(\b{u}_i)\big)^2$.
According to Eqs. (\ref{equation_Q_x}) and (\ref{equation_Q_y}), this cost function is equivalent to $\sum_{i=1}^n \sum_{l=1}^d (x_{i,l} - y_{\sigma(i),l})^2$ where $x_{i,l}$ and $y_{\sigma(i),l}$ denote the $l$-th dimension of $\b{x}_{i} = [x_{i,1}, \dots, x_{i,d}]^\top$ and $\b{y}_{\sigma(i)} = [y_{\sigma(i),1}, \dots, y_{\sigma(i),d}]^\top$, respectively.
In vector form, the cost function is restated as:
\begin{align}
\mathcal{L}_1 := \frac{1}{2}\, \sum_{i=1}^n \|\b{x}_{i} - \b{y}_{\sigma(i)}\|_2^2.
\end{align}

On the other hand, according to our definition of distribution transformation, we should also preserve the local distances of the nearby data points as far as possible to embed data locally \cite{saul2003think}.
For preserving the local distances, we minimize the differences of local distances between data and transformed data. 
We use the $k$-nearest neighbors ($k$-NN) graph for the set $\{\b{x}_i\}_{i=1}^n$. Let $\mathcal{N}_i$ denote the set containing the indices of the $k$ neighbors of $\b{x}_i$.
The cost to be minimized is:
\begin{align}\label{equation_Sammon_cost}
\mathcal{L}_2 := \frac{1}{2a} \sum_{i=1}^n \sum_{j \in \mathcal{N}_i} w_{ij} \big(d_{x}(i,j) - d_x^0(i,j) \big)^2,
\end{align}
where $d_x(i,j) := \|\b{x}_i - \b{x}_j\|_2$, $d_{x}^0(i,j) := \|\b{x}^0_i - \b{x}^0_j\|_2$, and $a := \sum_{i=1}^n \sum_{j \in \mathcal{N}_i} d_x^0(i,j)$ is the normalization factor. The weight $w_{ij} := 1 / d_x^0(i,j)$ gives more value to closer points as expected. 
Note that if $k = n-1$, Eq. (\ref{equation_Sammon_cost}) is the cost function used in Sammon mapping \cite{sammon1969nonlinear,lee2007nonlinear}.
We use this cost as a regularization term in our optimization. Therefore, our optimization problem is:
\begin{equation}
\begin{aligned}
& \underset{\b{X}}{\text{minimize}}
& & \mathcal{L} := \frac{1}{2}\, \sum_{i=1}^n \Big( \|\b{x}_{i} - \b{y}_{\sigma(i)}\|_2^2 \\
& & &~~~~~ + \frac{\lambda}{a} \sum_{j \in \mathcal{N}_i} w_{ij} \big( d_x(i,j) - d_x^0(i,j) \big)^2 \Big),
\end{aligned}
\end{equation}
where $\lambda \! > \! 0$ is the regularization parameter.

\begin{proposition}\label{proposition_QQE_gradient}
The gradient of the cost function with respect to $x_{i,l}$ is: 
\begin{equation}\label{equation_QQE_cost_gradient}
\begin{aligned}
& \frac{\partial \mathcal{L}}{\partial x_{i,l}} = (x_{i,l} - y_{\sigma(i),l}) \\
&~~~~~~~~~ + \frac{\lambda}{a} \sum_{j \in \mathcal{N}_i} \frac{d_x(i,j) - d_x^{0}(i,j)}{d_x(i,j)\, d_x^{0}(i,j)} (x_{i,l} - x_{j,l}).
\end{aligned}
\end{equation}
\end{proposition}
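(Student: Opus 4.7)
The plan is to decompose the objective as $\mathcal{L} = \mathcal{L}_1 + \lambda\, \mathcal{L}_2'$, where $\mathcal{L}_1 := \frac{1}{2}\sum_i \|\b{x}_i - \b{y}_{\sigma(i)}\|_2^2$ is the qq-plot alignment term and $\mathcal{L}_2' := \frac{1}{2a}\sum_i \sum_{j \in \mathcal{N}_i} w_{ij}(d_x(i,j) - d_x^0(i,j))^2$ is the Sammon-type local-distance penalty, and compute the two contributions to $\partial \mathcal{L}/\partial x_{i,l}$ separately. Since $\b{x}_i$ appears only in the $i$-th summand of $\mathcal{L}_1$, a one-line differentiation of the squared norm yields $\partial \mathcal{L}_1/\partial x_{i,l} = x_{i,l} - y_{\sigma(i),l}$, which is exactly the first term on the right-hand side of (\ref{equation_QQE_cost_gradient}).

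For $\mathcal{L}_2'$, the key identity is the chain rule applied to the Euclidean norm,
\begin{align*}
\frac{\partial\, d_x(i,j)}{\partial x_{i,l}} = \frac{x_{i,l} - x_{j,l}}{d_x(i,j)},
\end{align*}
which, when combined with the outer square, gives
\begin{align*}
\frac{\partial}{\partial x_{i,l}} \big( d_x(i,j) - d_x^0(i,j) \big)^2 = 2\, \frac{d_x(i,j) - d_x^0(i,j)}{d_x(i,j)}\, (x_{i,l} - x_{j,l}).
\end{align*}
Substituting $w_{ij} = 1/d_x^0(i,j)$ and observing that the factor of $2$ cancels the outer $1/2$, each summand in $\partial \mathcal{L}_2'/\partial x_{i,l}$ collapses to exactly $\frac{d_x(i,j) - d_x^0(i,j)}{d_x(i,j)\,d_x^0(i,j)}(x_{i,l} - x_{j,l})$, so adding the $\lambda$ prefactor reproduces the second term of (\ref{equation_QQE_cost_gradient}). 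The claim then follows by summing the two contributions.

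The one bookkeeping point to watch, and the closest thing to an obstacle, is that $\b{x}_i$ may enter $\mathcal{L}_2'$ not only through the outer index but also through the inner index $j$, whenever $i \in \mathcal{N}_{i'}$ for some other $i'$. The stated gradient retains only the anchor contribution, which is consistent either with treating the neighborhood relation as symmetric and the inner double sum as running over unordered edges, or with a coordinate-descent-style partial in which the neighbor indices are held fixed when updating $\b{x}_i$. I would make this convention explicit at the outset; once it is fixed, every remaining step is a direct application of the chain rule and requires nothing beyond elementary calculus.
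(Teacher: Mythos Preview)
Your proposal is correct and follows essentially the same route as the paper: decompose $\mathcal{L}$ into the alignment term and the Sammon-type term, differentiate the first directly, and handle the second via the chain rule through $\partial d_x(i,j)/\partial x_{i,l} = (x_{i,l}-x_{j,l})/d_x(i,j)$. Your added remark about the neighbor-index bookkeeping is a fair caveat that the paper itself leaves implicit, so there is no gap on your side.
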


\begin{proof}
Proof in Appendix \ref{section_appendix_A}.
\end{proof}

\begin{proposition}\label{proposition_QQE_second_derivative}
The second derivative of the cost function with respect to $x_{i,l}$ is:
\begin{align}\label{equation_QQE_cost_hessian}
&\frac{\partial^2 \mathcal{L}}{\partial x_{i,l}^2} = 1 + \frac{\lambda}{a} \sum_{j \in \mathcal{N}_i} \Big( \frac{d_x(i,j) - d_x^{0}(i,j)}{d_x(i,j)\, d_x^{0}(i,j)} + \frac{(x_{i,l} - x_{j,l})^2}{\big(d_x(i,j)\big)^3} \Big).
\end{align}
\end{proposition}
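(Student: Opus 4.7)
The plan is to simply differentiate the first-derivative expression from Proposition~\ref{proposition_QQE_gradient} once more with respect to $x_{i,l}$, treating the neighborhood set $\mathcal{N}_i$, the original distances $d_x^0(i,j)$, the normalization $a$, and the matched reference coordinate $y_{\sigma(i),l}$ as constants, since all of them are fixed by the initial sample $\{\b{x}_i^0\}$ and the permutation $\sigma$ and do not depend on the current embedding.

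First I would handle the easy term $(x_{i,l} - y_{\sigma(i),l})$, which differentiates to $1$ and contributes the leading constant in the claimed expression. The remaining work reduces to computing
\[
\frac{\partial}{\partial x_{i,l}}\!\left[\frac{d_x(i,j) - d_x^{0}(i,j)}{d_x(i,j)\, d_x^{0}(i,j)}\,(x_{i,l} - x_{j,l})\right]
\]
for each $j \in \mathcal{N}_i$. The key simplification is to rewrite the rational prefactor as
\[
\frac{d_x(i,j) - d_x^{0}(i,j)}{d_x(i,j)\, d_x^{0}(i,j)} \;=\; \frac{1}{d_x^{0}(i,j)} - \frac{1}{d_x(i,j)},
\]
so that only the second piece has any dependence on $x_{i,l}$.

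From here I apply the product rule. Differentiating $(x_{i,l}-x_{j,l})$ gives $1$, which multiplied by the prefactor reproduces the first bracketed term in the claim. For the other half I reuse the identity $\partial d_x(i,j)/\partial x_{i,l} = (x_{i,l}-x_{j,l})/d_x(i,j)$ established in the proof of Proposition~\ref{proposition_QQE_gradient}, combined with the chain rule on $1/d_x(i,j)$, to obtain $-\partial_{x_{i,l}}(1/d_x(i,j)) = (x_{i,l}-x_{j,l})/(d_x(i,j))^3$. Multiplying this by the remaining $(x_{i,l}-x_{j,l})$ factor yields exactly the second bracketed term $(x_{i,l}-x_{j,l})^2/(d_x(i,j))^3$. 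Summing over $j\in\mathcal{N}_i$ and restoring the $\lambda/a$ factor completes the derivation.

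I do not anticipate any real obstacle: the computation is a routine application of the product and chain rules. The only mild subtlety is noticing that the rational prefactor separates cleanly into an $x_{i,l}$-independent piece plus $-1/d_x(i,j)$; this makes the product rule collapse into exactly the two summands appearing in the target expression, without any additional algebraic reshuffling or cancellation.
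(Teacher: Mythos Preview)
Your proposal is correct and follows essentially the same route as the paper's proof: both differentiate the leading term to get $1$, apply the product rule to the summand, rewrite the rational prefactor so that only the $-1/d_x(i,j)$ piece carries $x_{i,l}$-dependence, and then invoke the identity $\partial d_x(i,j)/\partial x_{i,l} = (x_{i,l}-x_{j,l})/d_x(i,j)$ from Proposition~\ref{proposition_QQE_gradient} to obtain the $(x_{i,l}-x_{j,l})^2/(d_x(i,j))^3$ term. The only cosmetic difference is that the paper factors the prefactor as $\tfrac{1}{d_x^0}\bigl(1 - d_x^0/d_x\bigr)$ rather than splitting it directly into $\tfrac{1}{d_x^0} - \tfrac{1}{d_x}$, but this leads to the identical calculation.
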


\begin{proof}
Proof in Appendix \ref{section_appendix_B}.
\end{proof}

We use the quasi-Newton's method \cite{nocedal2006numerical} for solving this optimization problem inspired by \cite{sammon1969nonlinear}. If we consider the vectors component-wise, the diagonal quasi-Newton's method updates the solution as \cite{lee2007nonlinear}:
\begin{align}
x_{i,l}^{(\nu+1)} := x_{i,l}^{(\nu)} - \eta\, \Big|\frac{\partial^2 \mathcal{L}}{\partial x_{i,l}^2}\Big|^{-1}\, \frac{\partial \mathcal{L}}{\partial x_{i,l}}, 
\end{align}
$\forall i \in \{1, \dots, n\}, \forall l \in \{1, \dots, d\}$, where $\nu$ is the index of iteration, $\eta>0$ is the learning rate, and $|.|$ denotes the absolute value guaranteeing that we move toward the minimum and not maximum in the Newton's method.

\bigbreak
\noindent
\subsection{Distribution Transformation to the Shape of Reference Distribution}\label{section_distribution_transform_shape}

In distribution transformation, we can ignore the location and scale of the reference distribution and merely change the distribution of the observed sample to look like the ``shape'' of the reference distribution regardless of its location and scale. 
In other words, we start with an initial sample $\{\b{x}^0_i\}_{i=1}^n$ and transform it to $\{\b{x}_i\}_{i=1}^n$ whose shape of distribution is desired to be similar to the shape of distribution of a reference sample $\{\b{y}_{\sigma(i)}\}_{i=1}^n$.
Recall that if the qq-plot is a line, the shapes of the distributions are the same where the intercept and slope of the line correspond to the location and scale \cite{oldford2016self}. Therefore, in our optimization, rather than trying to make the qq-plot a line with slope one and intercept zero, we try to make it the closest line possible with any slope and intercept. This line can be found by fitting a line as a least squares problem, i.e., a linear regression problem. For the qq-plot of every dimension, we fit a line to the qq-plot.
If we define $\mathbb{R}^{n} \ni \breve{\b{Q}}_{Y,l} := [Q_{Y,l}(\b{u}_1), \dots, Q_{Y,l}(\b{u}_n)]^\top$, let $\mathbb{R}^{n \times 2} \ni \b{\Gamma}_l := [\b{1}_{n \times 1}, \breve{\b{Q}}_{Y,l}]$. 
Fitting a line to the qq-plot of the $l$-th dimension is the following least squares problem:
\begin{equation}
\begin{aligned}
& \underset{\b{\beta}_l}{\text{minimize}}
& & \frac{1}{2}\, \big\|\b{Q}_{X}(\b{u}_i) - \b{\Gamma}_l\, \b{\beta}_l\big\|_2^2 \overset{(\ref{equation_Q_x})}{=} \frac{1}{2}\, \big\|\b{x}_l - \b{\Gamma}_l\, \b{\beta}_l\big\|_2^2,
\end{aligned}
\end{equation}
whose solution is \cite{hastie2009elements}:
\begin{align}
\mathbb{R}^{2} \ni \b{\beta}_l = (\b{\Gamma}_l^\top \b{\Gamma}_l)^{-1} \b{\Gamma}_l^\top \b{x}_l,
\end{align}
where $\mathbb{R}^n \ni \b{x}_l := [x_{1,l}, \dots, x_{n,l}]^\top$.
The $n$ points on the line fitted to the qq-plot of the $l$-th dimension are:
\begin{align}
\mathbb{R}^{n} \ni \b{\mu}_l := \b{\Gamma}_l\, \b{\beta}_l = [\mu_{{\sigma(1)},l}, \dots, \mu_{{\sigma(n)},l}]^\top,
\end{align}
which are used instead of $\b{Q}_{Y}(\b{u}_i), \forall i$ in our optimization.
Defining $\mathbb{R}^d \ni \breve{\b{\mu}}(\b{y}_{\sigma(i)}) := [\mu_{{\sigma(i)},1}, \dots, \mu_{{\sigma(i)},d}]^\top$, the optimization problem is:
\begin{equation}
\begin{aligned}
& \underset{\b{Y}}{\text{minimize}}
& & \mathcal{L} := \frac{1}{2}\, \sum_{i=1}^n \Big( \|\b{x}_{i} - \breve{\b{\mu}}(\b{y}_{\sigma(i)})\|_2^2 \\
& & &~~~~~ + \frac{\lambda}{a} \sum_{j \in \mathcal{N}_i} w_{ij} \big( d_x(i,j) - d_x^0(i,j) \big)^2 \Big).
\end{aligned}
\end{equation}
Similar to Proposition \ref{proposition_QQE_gradient}, the gradient is:
\begin{equation}
\begin{aligned}
&\frac{\partial \mathcal{L}}{\partial x_{i,l}} = (x_{i,l} - \mu_{{\sigma(i)},l}) \\
&~~~~~~~~~ + \frac{\lambda}{a} \sum_{j \in \mathcal{N}_i} \frac{d_x(i,j) - d_x^{(0)}(i,j)}{d_x(i,j)\, d_x^{(0)}(i,j)} (x_{i,l} - x_{j,l}),
\end{aligned}
\end{equation}
and the second derivative is the same as Proposition \ref{proposition_QQE_second_derivative}. We again solve the optimization using the diagonal quasi-Newton's method \cite{nocedal2006numerical}.

\subsection{Manifold Embedding}\label{section_manifold_embedding}

QQE can be used for manifold embedding in a lower dimensional embedding space where the embedding distribution can be determined by the user. 
As an initialization, the high dimensional data are embedded in a lower dimensional embedding space using a dimensionality reduction method. Thereafter, the low dimensional embedding data are transformed to a desired distribution using QQE.

Any dimensionality reduction method can be utilized for the initialization of data in the low dimensional subspace. Some examples are PCA \cite{ghojogh2019unsupervised} (or classical MDS \cite{cox2008multidimensional}), FDA \cite{ghojogh2019fisher}, Isomap \cite{tenenbaum2000global}, LLE \cite{roweis2000nonlinear}, t-SNE \cite{van2009learning}, and deep features like triplet Siamese features \cite{schroff2015facenet} and ResNet features \cite{he2016deep}. By initialization, an initial embedding of data is obtained in the low dimensional embedding space.

After the initialization, a reference sample is drawn from the reference distribution or is taken from the user. The dimensionality of the reference sample is equal to the dimensionality of the low dimensional embedding space; in other words, the reference sample is in the low dimensional space.
We transform the distribution of the low dimensional data to the reference distribution using QQE. Again, the distribution transformation can be either to the exact or shape of the desired distribution. The proposed methods for distribution transformation to the exact reference distribution or shape of desired distribution were explained in Sections \ref{section_distribution_transform_exact} and \ref{section_distribution_transform_shape} and can be used here for distribution transformation in the low dimensional embedding space.

\subsection{Unsupervised and Supervised Embedding}\label{section_unsupervised_supervised}

QQE, for both tasks of distribution transformation (see Sections \ref{section_distribution_transform_exact} and \ref{section_distribution_transform_shape}) and manifold embedding (see Section \ref{section_manifold_embedding}), can be used in either supervised or unsupervised manners. 
In the following, we explain these two cases:
\begin{itemize}
\item In the \textit{unsupervised} form, all data points are seen together as a cloud of data and the distribution of all data points is transformed to a desired distribution. The unsupervised QQE algorithm for distribution transformation transforms the entire dataset to have the desired distribution. 
For manifold embedding, unsupervised QQE initializes the embedding data into the low dimensional space and then transforms the entire embedded data to have the desired distribution. 
\item In the \textit{supervised} manner, the data points of each class are transformed to have a desired distribution. Hence, in this manner, the user may choose different distributions for each class.
The supervised QQE for distribution transformation transforms the distribution of every class to a desired distribution. 
For manifold learning, supervised QQE initializes the embedding data into the low dimensional space and then transforms the embedded data of every class to a desired distribution. 
Note that QQE for manifold learning can be supervised regardless of whether the dimensionality reduction method used for initialization is unsupervised or supervised.
\end{itemize}

It is noteworthy that in both unsupervised and supervised manners of QQE, the distribution transformation and manifold embedding can be either to the exact reference distribution (see Section \ref{section_distribution_transform_exact}) or to the shape of reference distribution (see Section \ref{section_distribution_transform_shape}).

\section{Experiments}\label{section_experiments}

In this section, we report the experimental results. The code for this paper and its experiments can be found in our Github repository\footnote{https://github.com/bghojogh/Quantile-Quantile-Embedding}. The hardware used for the experiments was Intel Core-i7 CPU with the base frequency 1.80 GHz and  16 GB RAM. 
Table \ref{table_timing} reports the timing of different experiments for giving a sense of pacing in QQE algorithm. Note that the time complexity of QQE algorithm is $\mathcal{O}(n^3 + ndk)$ because of the assignment problem \cite{edmonds1972theoretical} and the optimization steps, respectively. Improvement of time complexity of QQE is a possible future direction discussed in Section \ref{section_conclusion}. 
Note that for all experiments in this article, unless specifically mentioned, we set $\lambda=0.1$, $\eta=0.01$, and $k=10$. A comprehensive discussion on the effect of these hyperparameters will be provided in Section \ref{section_hyperparameter_discussion}.

\begin{figure*}[!t]
\centering
\includegraphics[width=\textwidth]{./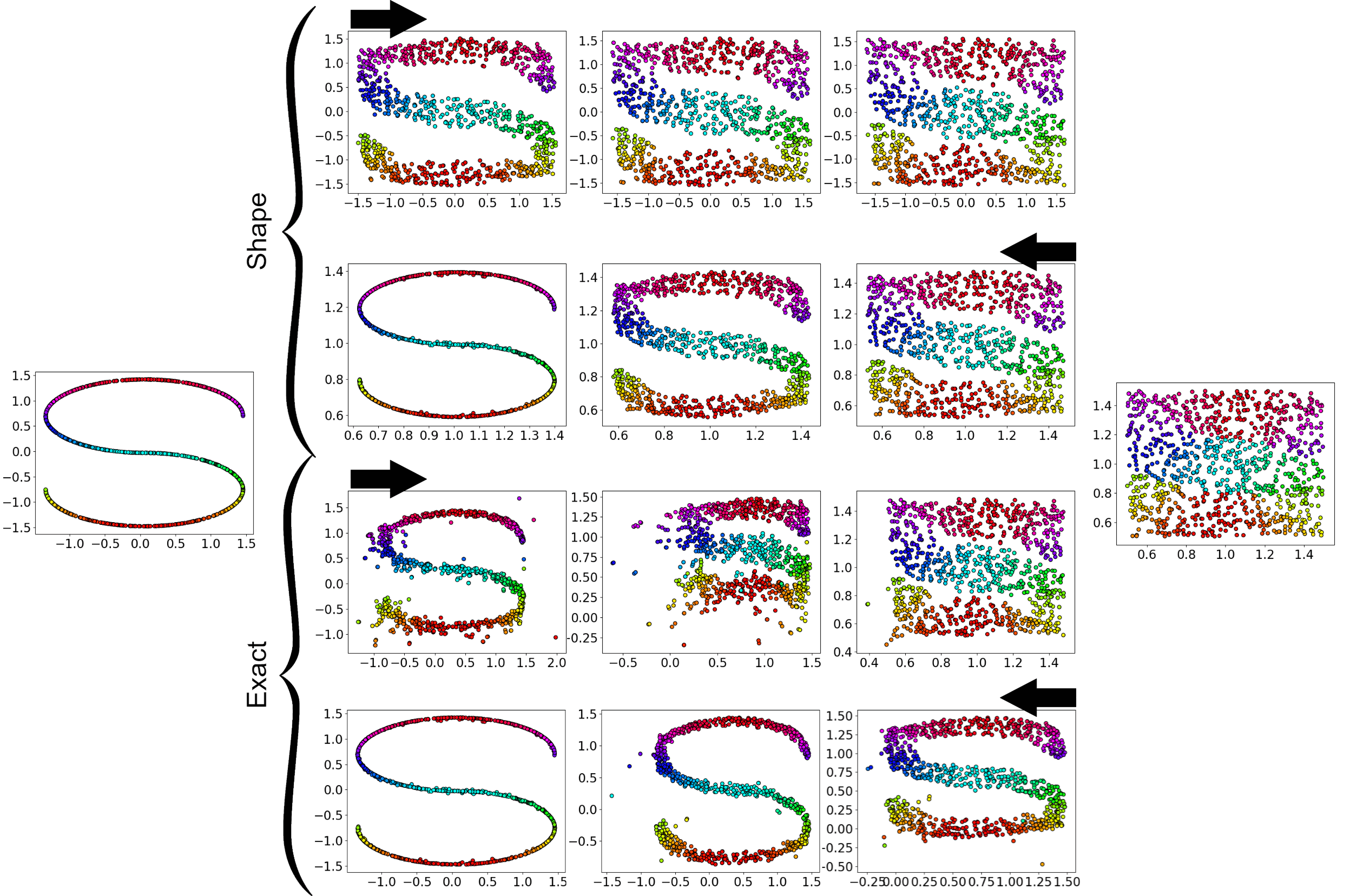}
\caption{Distribution transformation of S-shape and uniform data to each other. The first and second pair of rows correspond to transformation of shape and exact distributions, respectively. The arrows show the direction of gradual changes.}
\label{figure_distribution_transform_synthetic2}
\end{figure*}

\begin{figure*}[!t]
\centering
\includegraphics[width=6.5in]{./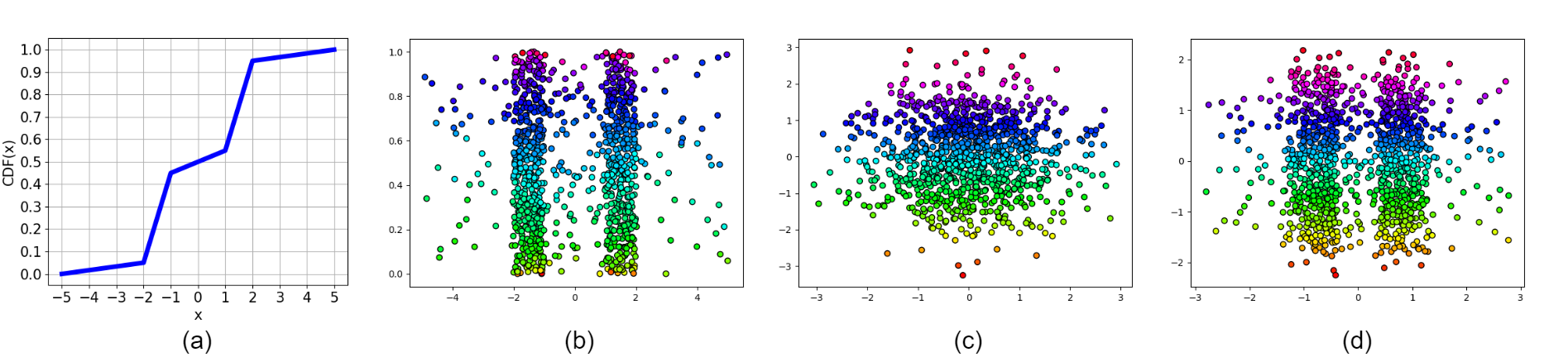}
\caption{Distribution transformation using (a) CDF of reference distribution: (b) the reference data, (c) Gaussian data, and (d) transformed data.}
\label{figure_CDF_synthetic}
\end{figure*}

\begin{figure*}[!t]
\centering
\includegraphics[width=\textwidth]{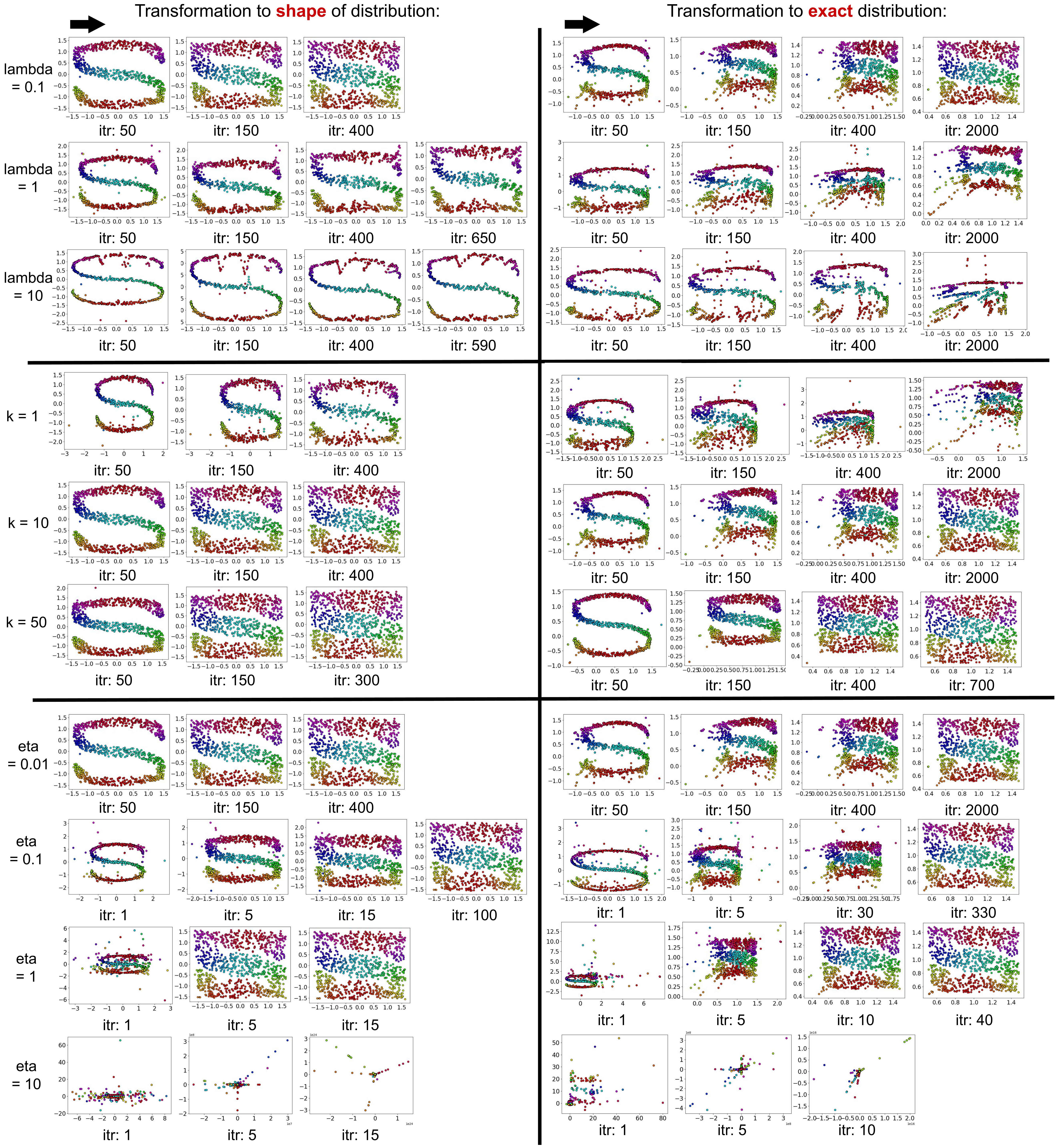}
\caption{Analysis of effect of hyperparameters on the performance of QQE for both transformations to the shape of distribution and exact distribution. The initial and reference distributions are as of the left and right distributions depicted in Fig. \ref{figure_distribution_transform_synthetic2}. }
\label{figure_hyperparameters}
\end{figure*}

\begin{table*}
\caption{The runtime for experiments in this paper. The reported times sum the timings for matching and fuzzy qq-plot iterations. In QQE manifold embedding, the time for initialization is not included. All times are in seconds. Letters U and S denote unsupervised and supervised QQE approaches, respectively.}
\label{table_timing}
\begin{minipage}{\textwidth}
\renewcommand{\arraystretch}{1.3}  
\centering
\scalebox{0.8}{    
\begin{tabular}{l || c | c | c | c | c | c | c | c }
\hline
\hline
Experiment & Fig. \ref{figure_distribution_transform_synthetic2} (1st row) & Fig. \ref{figure_distribution_transform_synthetic2} (2nd row) & Fig. \ref{figure_distribution_transform_synthetic2} (3rd row) & Fig. \ref{figure_distribution_transform_synthetic2} (4th row) & Fig. \ref{figure_CDF_synthetic} & Fig. \ref{figure_distribution_transform_Glasses}  \\
\hline
Time & 206.56 & 253.49 & 364.08 & 278.76 & 51.57 & 1064.85  \\
\hline
\hline
Experiment & Fig. \ref{figure_manifoldEmbedding_synthetic} (PCA, U) & Fig. \ref{figure_manifoldEmbedding_synthetic} (PCA, S) & Fig. \ref{figure_manifoldEmbedding_synthetic} (FDA, U) & Fig. \ref{figure_manifoldEmbedding_synthetic} (FDA, S) & Fig. \ref{figure_manifoldEmbedding_synthetic} (Isomap, U) & Fig. \ref{figure_manifoldEmbedding_synthetic} (Isomap, S) & Fig. \ref{figure_manifoldEmbedding_synthetic} (LLE, U) & Fig. \ref{figure_manifoldEmbedding_synthetic} (LLE, S) \\
\hline
Time & 206.81 & 450.87 & 262.82 & 269.86 & 187.53 & 283.65 & 328.74 & 276.10  \\
\hline
\hline
Experiment & Fig. \ref{figure_manifoldEmbedding_synthetic} (t-SNE, U) & Fig. \ref{figure_manifoldEmbedding_synthetic} (t-SNE, S) & Fig. \ref{figure_manifoldEmbedding_synthetic} (S, Exact)  \\
\hline
Time & 289.68 & 289.62 & 365.24  \\
\hline
\hline
Experiment & Fig. \ref{figure_manifoldEmbedding_mnist} (PCA, U) & Fig. \ref{figure_manifoldEmbedding_mnist} (PCA, S) & Fig. \ref{figure_manifoldEmbedding_mnist} (FDA, U) & Fig. \ref{figure_manifoldEmbedding_mnist} (FDA, S) & Fig. \ref{figure_manifoldEmbedding_mnist} (Isomap, U) & Fig. \ref{figure_manifoldEmbedding_mnist} (Isomap, S) & Fig. \ref{figure_manifoldEmbedding_mnist} (LLE, U) & Fig. \ref{figure_manifoldEmbedding_mnist} (LLE, S) \\
\hline
Time & 2787.25 & 2527.27 & 6803.98 & 2660.45 & 6654.53 & 14490.06 & 4885.59 & 649.08 \\
\hline
\hline
Experiment & Fig. \ref{figure_manifoldEmbedding_mnist} (t-SNE, U) & Fig. \ref{figure_manifoldEmbedding_mnist} (t-SNE, S) & Fig. \ref{figure_manifoldEmbedding_mnist} (ResNet, U) & Fig. \ref{figure_manifoldEmbedding_mnist} (ResNet, S) & Fig. \ref{figure_manifoldEmbedding_mnist} (Siamese, U) & Fig. \ref{figure_manifoldEmbedding_mnist} (Siamese, S) & Fig. \ref{figure_manifoldEmbedding_mnist} (S, Exact) &  \\
\hline
Time & 5373.26 & 2644.21 & 6075.75 & 2332.44 & 6281.83 & 30564.37 & 4893.62 \\
\hline
\hline
Experiment & Fig. \ref{figure_class_separation} (synthetic) & Fig. \ref{figure_class_separation} (face) & Fig. \ref{figure_Histopathology} (1st row) & Fig. \ref{figure_Histopathology} (2nd row) & Fig. \ref{figure_Histopathology} (3rd row)  \\
\hline
Time & 365.95 & 4814.96 & 1181.47 & 1488.40 & 1597.77 \\
\hline
\hline
\end{tabular}%
}
\end{minipage}
\end{table*}

\begin{table*}
\caption{The quantitative evaluation of QQE embeddings for experiments in this paper. Letters U and S denote unsupervised and supervised QQE approaches, respectively. For supervised cases, the reported number is the average of that measure among classes. In every cell of table, the left-side and right-side numbers correspond to before and after applying QQE algorithm, respectively.}
\label{table_comparison_distributions}
\begin{minipage}{\textwidth}
\renewcommand{\arraystretch}{1.3}  
\centering
\scalebox{0.7}{    
\begin{tabular}{l || c | c | c | c | c | c | c | c }
\hline
\hline
Experiment & Fig. \ref{figure_distribution_transform_synthetic2} (1st row) & Fig. \ref{figure_distribution_transform_synthetic2} (2nd row) & Fig. \ref{figure_distribution_transform_synthetic2} (3rd row) & Fig. \ref{figure_distribution_transform_synthetic2} (4th row) & Fig. \ref{figure_CDF_synthetic} & Fig. \ref{figure_distribution_transform_Glasses}  \\
\hline
KL-divergence & 4.90E-2 | 3.58E-2 & 3.91E-2 | 3.70E-2 & 4.90E-2 | 4.04E-2 & 3.91E-2 | 4.39E-2 & 3.18E-1 | 2.56E-1 & 2.40E-3 | 1.23E-3 \\
MMD$^2$ & 5.99E-1 | 5.84E-1 & 2.22E-16 | 6.07E-5  & 5.99E-1 | 5.47E-5 & 2.22E-16 | 3.86E-1 & 1.92E-1 | 1.87E-1 & 1.68E-2 | 1.68E-2 \\
HSIC & 7.33E-5 | 8.11E-5 & 2.11E-5 | 1.73E-5 & 7.33E-5 | 1.95E-5 & 2.11E-5 | 5.68E-5 & 3.18E-4 | 3.25E-4 & 8.47E-3 | 8.47E-3 \\
\hline
\hline
Experiment & Fig. \ref{figure_manifoldEmbedding_synthetic} (PCA, U) & Fig. \ref{figure_manifoldEmbedding_synthetic} (PCA, S) & Fig. \ref{figure_manifoldEmbedding_synthetic} (FDA, U) & Fig. \ref{figure_manifoldEmbedding_synthetic} (FDA, S) & Fig. \ref{figure_manifoldEmbedding_synthetic} (Isomap, U) & Fig. \ref{figure_manifoldEmbedding_synthetic} (Isomap, S) & Fig. \ref{figure_manifoldEmbedding_synthetic} (LLE, U) & Fig. \ref{figure_manifoldEmbedding_synthetic} (LLE, S) \\
\hline
KL-divergence & 2.10E-1 | 1.05E-1 & 9.23E-2 | 6.33E-3 & 1.30E-1 | 7.78E-2 & 8.22E-2 | 1.42E-2 & 2.52E-1 | 1.20E-1 & 8.45E-2 | 1.80E-2 & 3.26E-1 | 1.83E-1 & 1.45E-1 | 7.10E-2 \\
MMD$^2$ & 6.48E-1 | 5.92E-1 & 7.46E-1 | 7.46E-1 & 6.64E-1 | 5.93E-1 & 9.76E-1 | 9.77E-1 & 6.86E-1 | 6.21E-1 & 7.85E-1 | 7.88E-1 & 6.29E-1 | 7.60E-1 & 8.12E-1 | 7.79E-1 \\
HSIC & 7.52E-5 | 8.89E-5 & 1.82E-2 | 2.22E-2 & 1.26E-4 | 1.50E-4 & 1.65E-2 | 2.03E-2 & 9.37E-5 | 9.20E-5 & 1.62E-2 | 1.91E-2 & 1.28E-4 | 1.62E-4 & 6.46E-3 | 5.53E-3 \\
\hline
\hline
Experiment & Fig. \ref{figure_manifoldEmbedding_synthetic} (t-SNE, U) & Fig. \ref{figure_manifoldEmbedding_synthetic} (t-SNE, S) & Fig. \ref{figure_manifoldEmbedding_synthetic} (S, Exact)  \\
\hline
KL-divergence & 5.23E-2 | 5.43E-2 & 7.92E-2 | 2.43E-2 & 7.98E-2 | 4.34E-2 \\
MMD$^2$ & 8.59E-1 | 8.57E-1 & 8.65E-1 | 8.62E-1 & 7.59E-1 | 2.55E-1  \\
HSIC & 1.43E-4 | 1.44E-4 & 1.05E-3 | 7.14E-4 & 1.76E-2 | 1.66E-2 \\
\hline
\hline
Experiment & Fig. \ref{figure_manifoldEmbedding_mnist} (PCA, U) & Fig. \ref{figure_manifoldEmbedding_mnist} (PCA, S) & Fig. \ref{figure_manifoldEmbedding_mnist} (FDA, U) & Fig. \ref{figure_manifoldEmbedding_mnist} (FDA, S) & Fig. \ref{figure_manifoldEmbedding_mnist} (Isomap, U) & Fig. \ref{figure_manifoldEmbedding_mnist} (Isomap, S) & Fig. \ref{figure_manifoldEmbedding_mnist} (LLE, U) & Fig. \ref{figure_manifoldEmbedding_mnist} (LLE, S) \\
\hline
KL-divergence & 2.66E-1 | 1.14E-1 & 1.56E-1 | 3.77E-2 & 1.93E-1 | 8.36E-2 & 1.61E-1 | 4.41E-2 & 2.42E-1 | 1.02E-1 & 1.58E-1 | 3.85E-2 & 6.50E-1 | 5.51E-1 & 1.50E-1 | 1.19E-1 \\
MMD$^2$ & 4.48E-1 | 4.81E-1 & 5.63E-1 | 5.46E-1 & 4.20E-1 | 4.11E-1 & 7.81E-1 | 7.78E-1 & 8.53E-1 | 8.53E-1 & 5.43E-1 | 5.45E-1 & 3.30E-1 | 2.06E-1 & 1.23E0 | 1.24E0 \\
HSIC & 4.62E-5 | 4.70E-5 & 1.45E-2 | 1.91E-2 & 3.11E-5 | 3.16E-5 & 4.46E-2 | 6.01E-2 & 1.47E-5 | 1.47E-5 & 1.95E-3 | 2.49E-3 & 5.54E-5 | 6.52E-5 & 1.40E-2 | 1.64E-2 \\
\hline
\hline
Experiment & Fig. \ref{figure_manifoldEmbedding_mnist} (t-SNE, U) & Fig. \ref{figure_manifoldEmbedding_mnist} (t-SNE, S) & Fig. \ref{figure_manifoldEmbedding_mnist} (ResNet, U) & Fig. \ref{figure_manifoldEmbedding_mnist} (ResNet, S) & Fig. \ref{figure_manifoldEmbedding_mnist} (Siamese, U) & Fig. \ref{figure_manifoldEmbedding_mnist} (Siamese, S) & Fig. \ref{figure_manifoldEmbedding_mnist} (S, Exact) &  \\
\hline
KL-divergence & 5.11E-2 | 4.42E-2 & 1.10E-1 | 4.22E-2 & 1.89E-1 | 8.83E-2 & 1.65E-1 | 3.88E-2 & 1.02E-1 | 5.66E-2 & 1.31E-1 | 3.16E-2 & 1.48E-1 | 1.03E-1 & \\
MMD$^2$ & 8.13E-1 | 8.12E-1 & 5.49E-1 | 5.48E-1 & 6.15E-1 | 6.34E-1 & 6.72E-1 | 6.58E-1 & 4.87E-2 | 4.85E-2 & 1.23EE0 | 1.24E0 & 5.61E-1 | 4.66E-1 & \\
HSIC & 1.87E-5 | 1.85E-5 & 2.62E-3 | 3.24E-3 & 2.28E-5 | 2.32E-5 & 4.29E-2 | 5.79E-2 & 5.30E-5 | 5.49E-5 & 2.10E-2 | 2.31E-2 & 1.46E-2 | 3.65E-2 & \\
\hline
\hline
Experiment & Fig. \ref{figure_class_separation} (synthetic) & Fig. \ref{figure_class_separation} (face) & Fig. \ref{figure_Histopathology} (1st row) & Fig. \ref{figure_Histopathology} (2nd row) & Fig. \ref{figure_Histopathology} (3rd row) \\
\hline
KL-divergence & 2.00E-2 | 3.13E-2 & 1.27E-3 | 1.11E-3 & 3.35E-12 | 1.54E-14 & 4.33E-15 | 3.29E-15 & 3.33E-15 | 2.93E-15 \\
MMD$^2$ & 8.54E-1 | 3.02E-1 & 1.23E-2 | 1.23E-2 & 3.61E-1 | 3.04E-1 & 3.33E-1 | 3.13E-1 & 3.10E-1 | 1.04E-1 \\
HSIC & 4.33E-2 | 4.80E-2 & 6.02E-3 | 6.02E-3 & 2.53E-5 | 6.72E-5 & 1.36E-4 | 3.25E-4 & 3.21E-4 | 5.46E-4 \\
\hline
\hline
\end{tabular}%
}
\end{minipage}
\end{table*}



\subsection{Quantitative Measures Used for Difference of Distributions}\label{section_Quantitative_Measures}

In our experimental results, in addition to illustrating the visualization of distribution transformation either in the input space or in the embedding space, we report several quantitative measurements for validating distribution transformation theoretically. 
Table \ref{table_comparison_distributions} reports the quantitative measurements for all experiments, 
showing the improvement of change of distributions to the desired distributions using the QQE algorithm. 
The three measures used are KL-divergence, MMD and HSIC which are briefly defined below. 

Assume we have two samples from the following distributions: $\{\b{x}_i\}_{i=1}^n \sim \mathcal{P}$ and $\{\b{y}_i\}_{i=1}^n \sim \mathcal{Q}$.
The first used measure is KL-divergence \cite{kullback1951information}. The KL-divergence for discrete samples is defined as:
\begin{align}
\text{KL}(\mathcal{P} \| \mathcal{Q}) := \sum_{i=1}^n \mathcal{P}(\b{x}_i) \log\Big(\frac{\mathcal{P}(\b{x}_i)}{\mathcal{Q}(\b{y}_i)}\Big), 
\end{align}
for the difference of distributions $\mathcal{P}$ and $\mathcal{Q}$. We estimate $\mathcal{P}(\b{x}_i)$ and $\mathcal{Q}(\b{y}_i)$ using kernel density estimation with Gaussian kernels and the Scott’s rule \cite{scott2015multivariate}. 
Note that KL $\geq 0$ where KL $=0$ means the two distributions are equivalent. 
After applying QQE for distribution transformation or manifold embedding, we would expect the KL-divergence between the sample $\{\b{x}_i\}_{i=1}^n$ and the reference sample $\{\b{y}_i\}_{i=1}^n$ to be reduced.
Note that the amount of reduction of KL-divergence is not necessarily meaningful as KL-divergence does not have any upperbound. 

The second measure used for difference of distributions is MMD \cite{gretton2007kernel,gretton2012kernel}. It compares the moments of distributions using distances in the feature space \cite{scholkopf2001kernel}. 
Let $\b{\phi}(\b{x})$ be the pulling function from the input to the feature space and $k(\b{x}_i, \b{x}_j) := \b{\phi}(\b{x}_i)^\top \b{\phi}(\b{x}_j)$ be the kernel function \cite{hofmann2008kernel}.
It is defined as:
\begin{align}
\text{MMD}^2(\mathcal{P}, \mathcal{Q}) &:= \Big\|\frac{1}{n} \sum_{i=1}^n \b{\phi}(\b{x}_i) - \frac{1}{n} \sum_{i=1}^n \b{\phi}(\b{y}_i)\Big\|^2 \nonumber \\
&= \frac{1}{n^2} \sum_{i=1}^n \sum_{j=1}^n k(\b{x}_i, \b{x}_j) + \frac{1}{n^2} \sum_{i=1}^n \sum_{j=1}^n k(\b{y}_i, \b{y}_j) \nonumber \\
&~~~~ - \frac{2}{n^2} \sum_{i=1}^n \sum_{j=1}^n k(\b{x}_i, \b{y}_j),
\end{align}
where $\|.\|$ denotes a norm in the feature/Hilbert space. 
Note that MMD $\geq 0$ where MMD $=0$ means the two distributions are equivalent. 
After applying QQE for distribution transformation or manifold embedding, it is mostly expected to have smaller MMD between the sample $\{\b{x}_i\}_{i=1}^n$ and the reference sample $\{\b{y}_i\}_{i=1}^n$.
As MMD does not have any upperbound, the amount of reduction of MMD is not important but the reduction itself is mostly expected. 

The third method used in this paper for quantitative measurements is HSIC \cite{gretton2005measuring}. It estimates the dependence of two random variables by computing the correlation of the pulled data $\b{\phi}(\b{x}_i)$ and $\b{\phi}(\b{x}_j)$ using the Hilbert-Schmidt norm of their cross-covariance. One can refer to \cite{gubner2006probability} for definitions of the Hilbert-Schmidt norm and the cross-covariance matrix of two random variables. An empirical estimate of HSIC between samples $\{\b{x}_i\}_{i=1}^n$ and $\{\b{y}_i\}_{i=1}^n$ is \cite{gretton2005measuring}:
\begin{align}
\text{HSIC}(X, Y) := \frac{1}{(n-1)^2} \textbf{tr}(\b{K}_x \b{H} \b{K}_y \b{H}),
\end{align}
where $\textbf{tr}(.)$ denotes the trace of matrix and $\b{K}_x$ and $\b{K}_y$ are kernels over samples $\{\b{x}_i\}_{i=1}^n$ and $\{\b{y}_i\}_{i=1}^n$, respectively \cite{hofmann2008kernel}. $\mathbb{R}^{n \times n} \ni \b{H} := \b{I} - (1/n) \b{1}\b{1}^\top$ is the centering matrix where $\b{I}$ and $\b{1}$ denote the identity matrix and the vector of ones, respectively. 
Note that HSIC $\geq 0$ where HSIC $=0$ means the two random variables are independent. The more the HSIC is, the more dependent the variables are. After applying QQE for distribution transformation or manifold embedding, it is mostly expected to have larger HSIC between the sample $\{\b{x}_i\}_{i=1}^n$ and the reference sample $\{\b{y}_i\}_{i=1}^n$.
As HSIC does not have any upperbound, the amount of increase of HSIC is not important but the increase itself is mostly expected. 
It is noteworthy that the trend of decrease in KL-divergence and MMD often coincide with the trend of increase in HSIC; although in some rare cases, this coincident does not hold. 

\subsection{Distribution Transformation for Synthetic Data}

To visually show how distribution transformation works, we report the results of QQE on some synthetic datasets. In the following, we report several different possible cases for distribution transformation.

\begin{figure}[!t]
\centering
\includegraphics[width=3.45in]{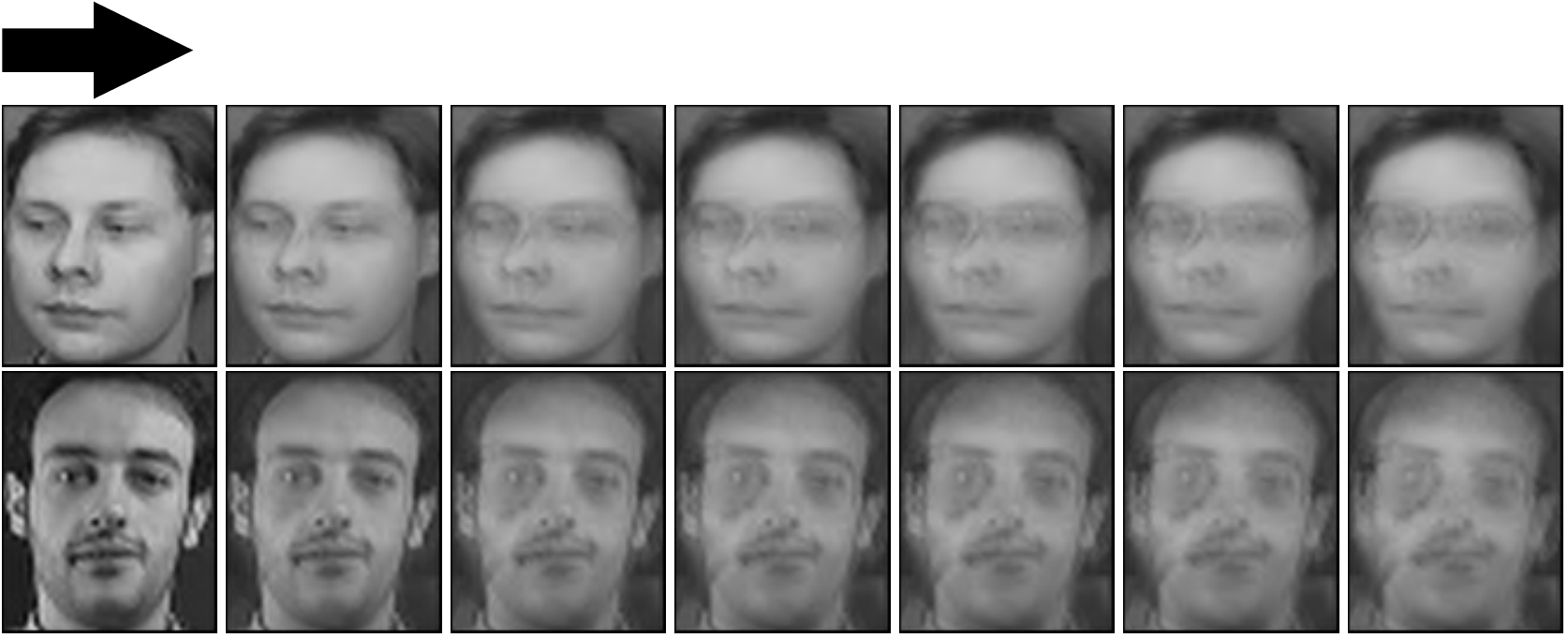}
\caption{Distribution transformation of facial images \cite{samaria1994parameterisation,web_ORL_dataset} without eyeglasses to the shape of images with eyeglasses. The arrow shows the direction of gradual changes.}
\label{figure_distribution_transform_Glasses}
\end{figure}

\begin{figure*}[!t]
\centering
\includegraphics[width=6.5in]{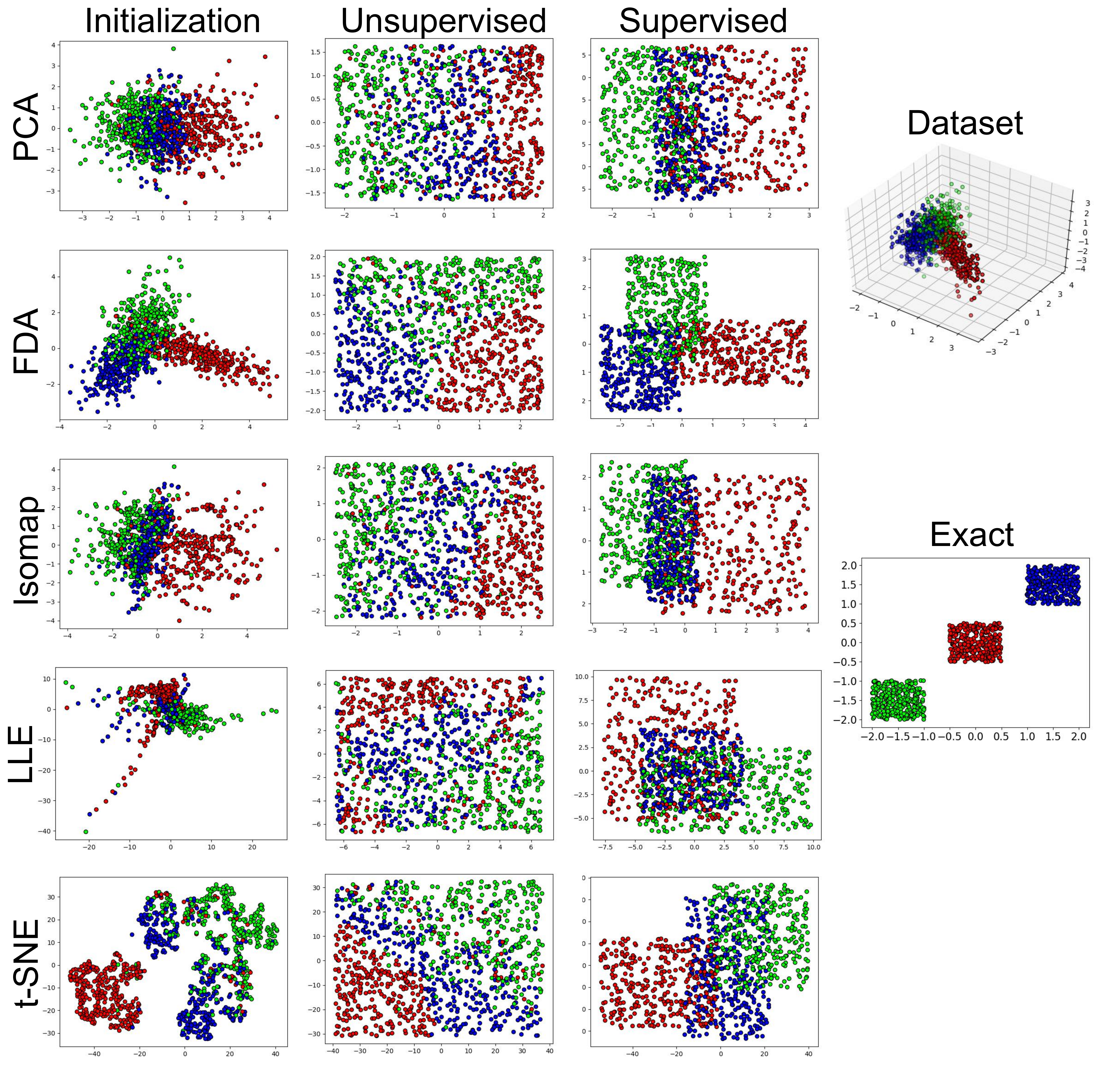}
\caption{Unsupervised and supervised exact manifold embedding of the synthetic data with different initializations. Transformation to exact reference distribution is also shown. The initialization of LLE is scaled by constant to be in range of other embeddings.}
\label{figure_manifoldEmbedding_synthetic}
\end{figure*}

\begin{figure*}[!t]
\centering
\includegraphics[width=6.4in]{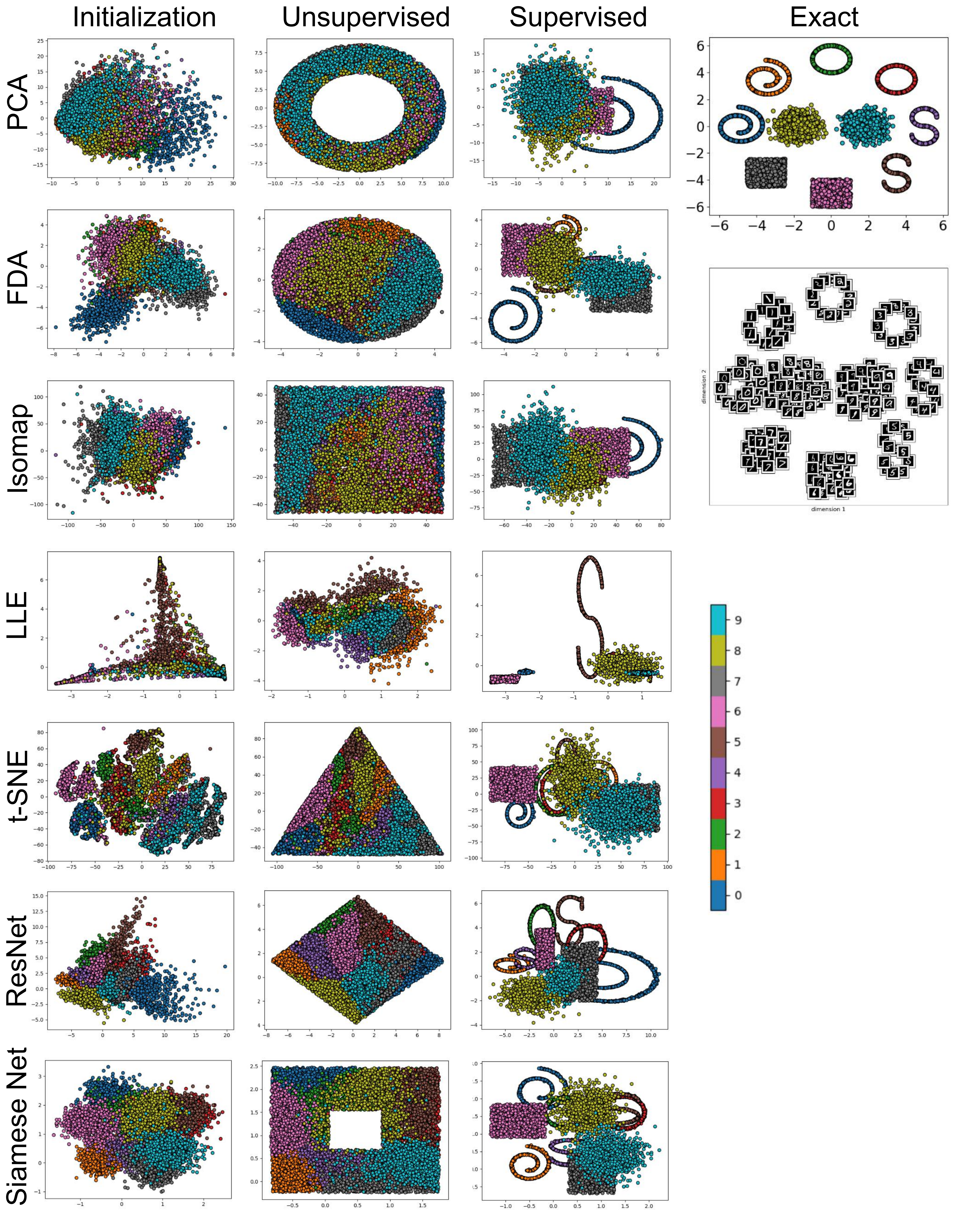}
\caption{Unsupervised and supervised exact manifold embedding of the image data with different initializations. Transformation to exact reference distribution is also shown. The initialization of LLE is scaled by constant to be in range of other embeddings.}
\label{figure_manifoldEmbedding_mnist}
\end{figure*}

\begin{figure*}[!t]
\centering
\includegraphics[width=7in]{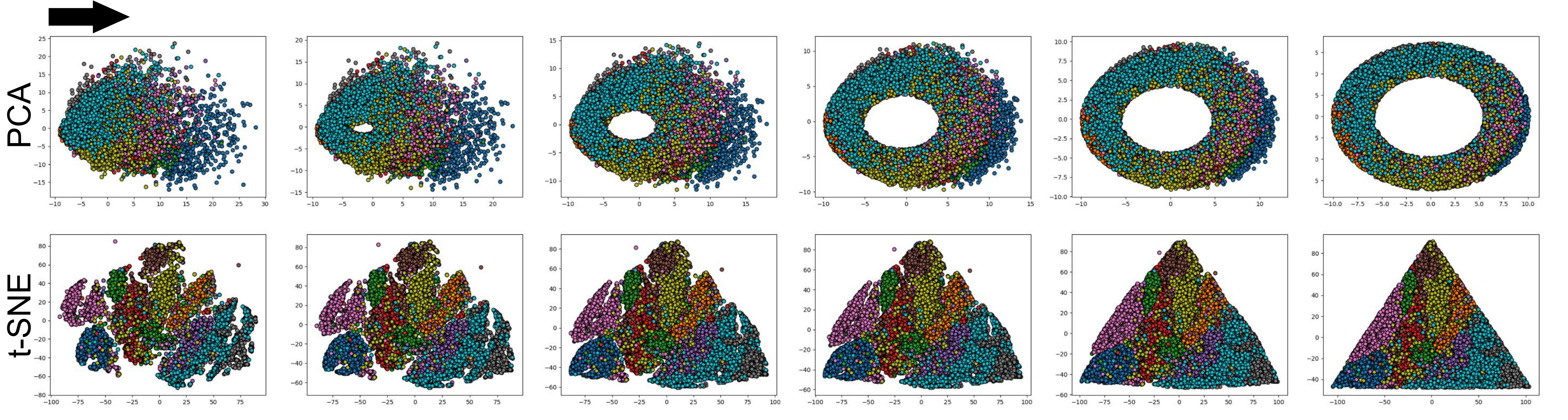}
\caption{Some iterations of unsupervised manifold embedding initialized by PCA and t-SNE. The arrow shows the direction of gradual changes.}
\label{figure_manifoldEmbedding_mnist_iterations}
\end{figure*}

\subsubsection{Standard Reference Distributions}

A simple option for the reference distribution is a standard probability distribution. As an example, we drew a sample of size $1000$ from the two dimensional uniform distribution in range $[0.5, 1.5]$ in both dimensions. This sample is depicted at the right hand side of Fig. \ref{figure_distribution_transform_synthetic2}. We also created an S-shape dataset, with mean zero and in range $[-1.5, 1.5]$ in both dimensions, illustrated at the left hand side of Fig. \ref{figure_distribution_transform_synthetic2}. As this figure shows, in transforming the S-shape data to the shape of uniform distribution, the dataset gradually expands to fill the gaps and become similar to the uniform distribution without changing its mean and scale.  
In transforming to the exact uniform distribution, however, the mean and scale of data change gradually, by translation and contraction, to match the moments of the reference distribution. 
The runtime for this experiment is reported in Table \ref{table_timing}. 
The KL-divergence, MMD, and HSIC of distribution transformation of S-shape data to either the shape of uniform distribution or the exact uniform distribution are reported in Table \ref{table_comparison_distributions}. As expected, after applying QQE, the KL-divergence and MMD have decreased and HSIC has often increased. 
Note that transformations to exact distribution mostly have smaller KL-divergence and MMD and larger HSIC compared to transformations to the shape of reference distribution. This is because exact transformation matches all moments while some moments are not matched in shape transformation. 

\subsubsection{Given Reference Sample}

We can also define the target distribution to transform to using an empirical reference sample. An example is the S-shape data shown in Fig. \ref{figure_distribution_transform_synthetic2} where we transform the uniform data to its distribution. In shape transformation, two gaps appear first to imitate the S shape and then the stems become narrower iteratively. In exact transformation, however, the mean and scale of data also change. Note that exact transformation is harder than shape transformation because of the change of moments; thus, some points jump at initial iterations and then converge gradually. In Section \ref{section_conclusion}, we report on future work to make QQE more robust to these jumps. 
The runtime for this experiment is reported in Table \ref{table_timing}. 
The KL-divergence, MMD, and HSIC of distribution transformation of uniform data to either the shape of S-shape distribution or the exact S-shape distribution are reported in Table \ref{table_comparison_distributions}. As expected, after applying QQE, the KL-divergence has decreased and HSIC has often increased. 
Again, transformation to exact distribution mostly have smaller KL-divergence and MMD and larger HSIC compared to transformation to the shape of reference distribution, for the reason explained before.
The experiments of distribution transformation to a standard or a given distribution show that the proposed QQE can change the distribution of data to any distribution. This desired reference distribution can be a simple or a complicated distribution. Moreover, it can be either a theoretical distribution or an available reference sample. 

\subsubsection{Given Cumulative Distribution Function}

Instead of a standard reference distribution or a reference sample, the user can give a desired CDF for the distribution to have. The reference sample can be sampled using the inverse CDF \cite{ghojogh2020sampling}. The CDF can be multivariate; however, for the sake of visualization, Fig. \ref{figure_CDF_synthetic}-a shows an example multi-modal univariate CDF. We used this CDF and uniform distribution for the first and second dimensions of the reference sample, respectively, shown in  Fig. \ref{figure_CDF_synthetic}-b. QQE was applied on the Gaussian data shown in Fig. \ref{figure_CDF_synthetic}-c and its distribution changed to have a CDF similar to the reference CDF (see Fig. \ref{figure_CDF_synthetic}-d). 
The runtime for this experiment is reported in Table \ref{table_timing}. 
The KL-divergence, MMD, and HSIC of distribution transformation of Gaussian data to either the given CDF are reported in Table \ref{table_comparison_distributions}. As expected, after applying QQE, the KL-divergence and MMD have decreased and HSIC has increased. 
This experiment shows that the proposed QQE gives flexibility to user to even choose the desired distribution by a CDF function or plot. This validates the user-friendliness of QQE algorithm. 

\subsection{Discussion on Impact of Hyperparameters}\label{section_hyperparameter_discussion}

Here, we discuss the impact of hyperparameters $\lambda$, $\eta$, and $k$ on the performance of QQE.
QQE is not yet applicable on out-of-sample data (see Section \ref{section_conclusion}) so these parameters cannot be determined by validation; however, here, we briefly discuss the impact of these hyperparameters. 
For better understanding of discussion, we illustrate the performance of QQE under different hyperparameter settings in Fig. \ref{figure_hyperparameters}. This illustration shows the impact of hyperparameters on distribution transformation or manifold embedding by QQE if the transformations are performed in the input space or embedding space, respectively. 
The average time of quasi-Newton iterations in QQE for experiments of Fig. \ref{figure_hyperparameters} are reported in Table \ref{table_hyperparameter_effect_time}. 

The regularization parameter $\lambda$ determines the importance of distance preserving compared to the quantile-quantile plot of distributions. The larger this parameter gets, the less important the distribution transformation becomes compared to preserving distances; hence, the slower the progress of optimization gets. 
As Fig. \ref{figure_hyperparameters} illustrates, small enough $\lambda$ converges both faster and better.
The value $\lambda=0.1$ was empirically found to be proper for different datasets. 
The learning rate $\eta$ should be set small enough to have progress in optimization without oscillating behavior. 
As shown in Fig. \ref{figure_hyperparameters}, larger $\eta$ makes convergence faster but may result in divergence of optimization.  
We empirically found $\eta=0.01$ or $\eta=0.1$ to be good for different datasets. 
The larger number of neighbors $k$ results in slower pacing of optimization because of Eqs. (\ref{equation_QQE_cost_gradient}) and (\ref{equation_QQE_cost_hessian}). This can be validated by average time of QQE for large value of $k$ reported in Table \ref{table_hyperparameter_effect_time}. Very small $k$, however, does not capture the local patterns of data \cite{saul2003think}. 
For this reason, as Fig. \ref{figure_hyperparameters} depicts, small $k$ does not perform perfectly for QQE.
The value $k=10$ is fairly proper for different datasets.

\begin{table*}[!t]
\caption{The average time of QQE iterations for experiments on the impact of hyperparameters, illustrated in Fig. \ref{figure_hyperparameters}. The reported average times are in seconds.}
\label{table_hyperparameter_effect_time}
\begin{minipage}{\textwidth}
\renewcommand{\arraystretch}{1.3}  
\centering
\scalebox{1}{    
\begin{tabular}{l || c | c | c || c | c | c || c | c | c | c}
\hline
\hline
Transformation type & $\lambda=0.1$ & $\lambda=1$ & $\lambda=10$ & $k=1$ & $k=10$ & $k=50$ & $\eta=0.01$ & $\eta=0.1$ & $\eta=1$ & $\eta=10$ \\
\hline
Shape & 0.63 & 0.56 & 0.61 & 0.47 & 0.63 & 1.11 & 0.63 & 0.51 & 0.49 & 0.55 \\
Exact & 0.61 & 0.20 & 0.35 & 0.35 & 0.61 & 1.03 & 0.61 & 0.53 & 0.53 & 0.48 \\
\hline
\hline
\end{tabular}%
}
\end{minipage}
\end{table*}

\subsection{Distribution Transformation for Image Data}

The distribution transformation can be used for any real data such as images. We divided the ORL facial images \cite{samaria1994parameterisation,web_ORL_dataset} into two sets of with and without eyeglasses. The set with eyeglasses was taken as the reference sample and we transformed the set without glasses to have the shape of reference distribution. Figure \ref{figure_distribution_transform_Glasses} illustrates the gradual change of two example faces from not having eyeglasses to having them. The glasses have appeared gradually in the eye regions of faces. 
The runtime for this experiment is reported in Table \ref{table_timing}. 
The KL-divergence, MMD, and HSIC of distribution transformation of facial image data are reported in Table \ref{table_comparison_distributions}. As expected, after applying QQE, the KL-divergence has decreased.
As shown by this experiment, the proposed QQE can be useful for image processing and image modification purposes where the distribution of image is changed to a desired theoretical distribution or the distribution of another set of images.

\subsection{Manifold Embedding for Synthetic Data}

To test QQE for manifold embedding, we created a three dimensional synthetic dataset having three classes shown in Fig. \ref{figure_manifoldEmbedding_synthetic}. Different dimensionality reduction methods, including PCA \cite{ghojogh2019unsupervised}, FDA \cite{ghojogh2019fisher}, Isomap \cite{tenenbaum2000global}, LLE \cite{roweis2000nonlinear}, and t-SNE \cite{van2009learning}, were used for initialization (see the first column in Fig. \ref{figure_manifoldEmbedding_synthetic}).
There are multiple experiments shown in Fig. \ref{figure_manifoldEmbedding_synthetic} which we explain in the following:
\begin{itemize}
\item For our \textit{unsupervised} experiment, we used a uniform distribution as reference and transformed the entire embedded data in an unsupervised manner. As the second column in Fig. \ref{figure_manifoldEmbedding_synthetic} shows, the embeddings of the entire dataset have changed to have the \textit{shape} of the uniform distribution but the order and adjacency of classes/points differ depending on the initialization method. 
\item The results of our \textit{supervised} experiments are shown in the third column in Fig. \ref{figure_manifoldEmbedding_synthetic}. The desired reference distribution for every class was a uniform distribution and we desired the \textit{shape} of a uniform distribution. As the figure depicts, the supervised QQE has made the shape of distribution of every class uniform without changing its mean and scale.
\item The last column of Fig. \ref{figure_manifoldEmbedding_synthetic} shows the \textit{supervised} transformation of every embedded class to an \textit{exact} reference distribution. The three exact reference distributions (one for each class) are uniform distributions with different means. In exact transformation, the adjacency of points differ depending on the initialization method but the data patterns are similar so we show only one result.
\end{itemize}

The runtime for these experiments are reported in Table \ref{table_timing}. 
The KL-divergence, MMD, and HSIC of manifold embedding by QQE are reported in Table \ref{table_comparison_distributions}. As expected, after applying QQE, the KL-divergence and MMD have often decreased and HSIC has often increased.

\begin{figure*}[!t]
\centering
\includegraphics[width=\textwidth]{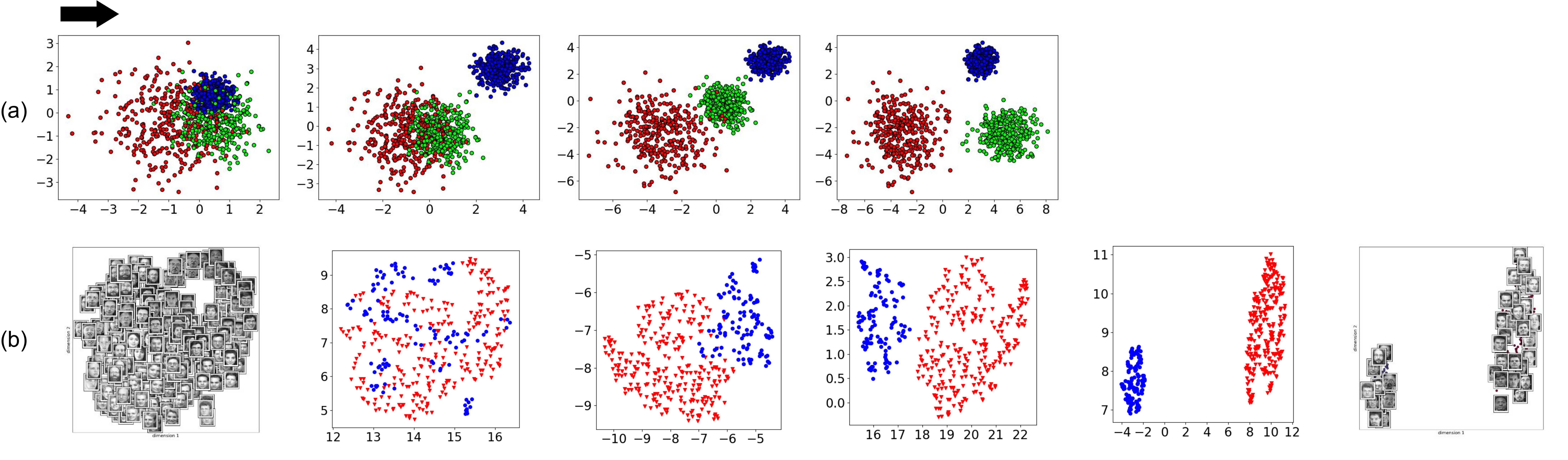}
\caption{Separation and discrimination of classes in synthetic and image data. The arrow shows the direction of gradual changes.}
\label{figure_class_separation}
\end{figure*}

\begin{table}[!t]
\caption{The Recall@k measure for $k \in \{1,2,4,8\}$ for evaluation of separation of classes using QQE algorithm. This table is a quantitative measure of the steps of the experiments shown in Fig. \ref{figure_class_separation}.}
\label{table_recall_at}
\renewcommand{\arraystretch}{1.3}  
\centering
\scalebox{1}{    
\begin{tabular}{l || c | c | c | c}
\hline
\hline
Synthetic Data & Step 1 & Step 2 & Step 3 & Step 4 \\
\hline
Recall@1 & 71.00 & 83.20 & 97.00 & 100.00 \\
Recall@2 & 83.60 & 91.70 & 98.60 & 100.00 \\
Recall@4 & 91.60 & 96.40 & 98.80 & 100.00 \\
Recall@8 & 95.50 & 98.70 & 99.00 & 100.00 \\
\hline
\hline
Face (Eye-glasses) Data & Step 1 & Step 2 & Step 3 & Step 4 \\
\hline
Recall@1 & 89.25 & 97.25 & 100.00 & 100.00 \\
Recall@2 & 95.75 & 98.75 & 100.00 & 100.00 \\
Recall@4 & 98.75 & 99.25 & 100.00 & 100.00 \\
Recall@8 & 99.75 & 99.50 & 100.00 & 100.00 \\
\hline
\hline
\end{tabular}%
}
\end{table}

\subsection{Image Manifold Embedding}

QQE can be used for manifold embedding of real data such as images. For the experiments, we sampled 10000 images from the MNIST digit dataset \cite{lecun1998gradient} with 1000 images per digit. This sampling is because of computational reasons for the time complexity of QQE (see Section \ref{section_conclusion}). We used different initialization methods, i.e., PCA \cite{ghojogh2019unsupervised}, FDA \cite{ghojogh2019fisher}, Isomap \cite{tenenbaum2000global}, LLE \cite{roweis2000nonlinear}, t-SNE \cite{van2009learning}, ResNet-18 features \cite{he2016deep} (with cross entropy loss after the embedding layer), and deep triplet Siamese features \cite{schroff2015facenet} (with ResNet-18 as the backbone network). Any embedding space dimensionality can be used but here, for visualization, we took it to be two. The initialized embeddings are illustrated in the first column in Fig. \ref{figure_manifoldEmbedding_mnist}.

Figure \ref{figure_manifoldEmbedding_mnist} shows the results of experiments which we explain in the following:
\begin{itemize}
\item For \textit{unsupervised} QQE, we took ring stripe, filled circle, uniform (square), Gaussian mixture model, triangle, diamond, and thick square as the reference distribution for embedding initialized by PCA, FDA, Isomap, LLE, t-SNE, ResNet, and Siamese net, respectively. 
As shown in the second column in Fig. \ref{figure_manifoldEmbedding_mnist}, the shape of entire embedding has changed to the desired while the local distances are preserved as much as possible. 
Figure \ref{figure_manifoldEmbedding_mnist_iterations} illustrates some iterations of changes in PCA and t-SNE embeddings as examples. 
\item For \textit{supervised} transformation to the \textit{shape} of references distributions, we used different distributions to show that QQE can use any various references for different classes. Helix, circle, S-shape, uniform, and Gaussian were used for the digits 0/1, 2/3, 4/5, 6/7, 8/9, respectively. The third column in Fig. \ref{figure_manifoldEmbedding_mnist} depicts the supervised transformation to shapes of distributions. 
\item The fourth column in Fig. \ref{figure_manifoldEmbedding_mnist} shows the \textit{supervised} QQE embedding to the \textit{exact} reference distributions. We set the means of reference distributions to be on a global circular pattern. As the fourth column in Fig. \ref{figure_manifoldEmbedding_mnist} shows, it resulted in the transformation of classes to the exact reference distributions on a circular pattern. The images of embedded digits are also shown in this figure.
\end{itemize}

The runtime for these experiments are reported in Table \ref{table_timing}. 
The KL-divergence, MMD, and HSIC of manifold embedding by QQE are reported in Table \ref{table_comparison_distributions}. As expected, after applying QQE, the KL-divergence and MMD have often decreased and HSIC has often increased. 
The experiments of manifold embedding for both synthetic and image data show that the proposed QQE fills the gap of having a manifold learning method with ability to choose the embedding distribution. This is important because the manifold learning methods so far did not give this freedom to user or they forced a specific distribution.

\begin{figure*}[!t]
\centering
\includegraphics[width=\textwidth]{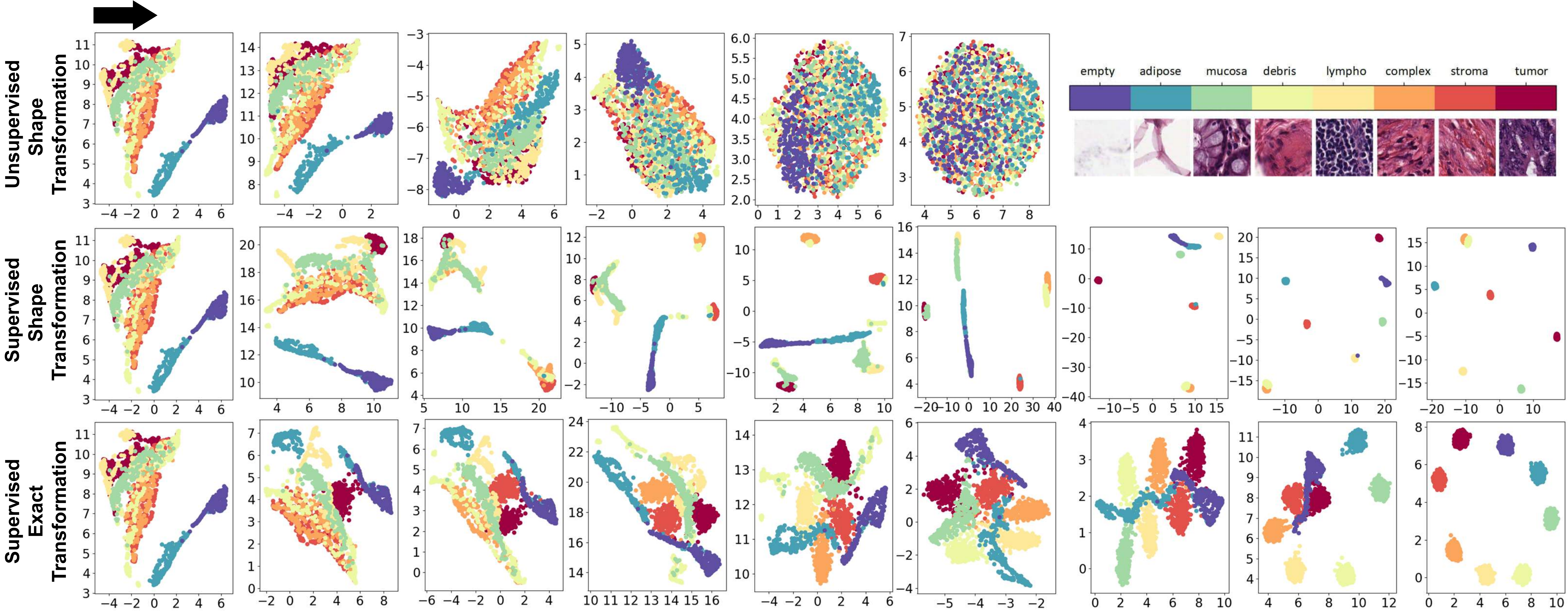}
\caption{Applying QQE algorithm, for unsupervised shape transformation, supervised shape transformation, and supervised exact transformation, on colorectal histopathology embedding. The initialization method for manifold embedding was FDT loss with embedding dimensionality of 128. UMAP is used for 2D visualization of embeddings.} 
\label{figure_Histopathology}
\end{figure*}

\subsection{QQE for Separation of Classes}

QQE can be used for separation and discrimination of classes; although, it does not yet support out-of-sample data (see Section \ref{section_conclusion}). 
For this, reference distributions with far-away means can be chosen where transformation to the exact distribution is used. Hence, the classes move away to match the first moments of reference distributions. 
We experimented this for both synthetic and image data. A two dimensional synthetic dataset with three mixed classes was created as shown in Fig. \ref{figure_class_separation}. The three classes are gradually separated by QQE to match three Gaussian reference distributions with apart means.

For image data, we used the ORL face dataset \cite{samaria1994parameterisation} with two classes of faces with and without eyeglasses. The distribution transformation was performed in the input (pixel) space. The two dimensional embeddings, for visualization in Fig. \ref{figure_class_separation}, were obtained using the Uniform Manifold Approximation and Projection (UMAP) \cite{mcinnes2018umap}. 
The dataset was standardized and the reference distributions were set to be two Gaussian distributions with apart means. 
As the figure shows, the two classes are mixed first but gradually the two classes are completely separated by QQE. 

The runtime for both of these experiments are reported in Table \ref{table_timing}. 
The KL-divergence, MMD, and HSIC of separation of classes for both synthetic and image data by QQE are reported in Table \ref{table_comparison_distributions}. As expected, after applying QQE, the KL-divergence and MMD have often decreased and HSIC has often increased.
Furthermore, the quantitative evaluation of the separation of classes using QQE is reported in Table \ref{table_recall_at} for both synthetic and image data. Following the literature \cite{qian2019softtriple,nguyen2020improved,sikaroudi2020batch}, we used the Recall@k measure for supervised evaluation of embedding in terms of discrimination of classes. As this table shows, QQE has improved the separation of classes by its steps. This improvement in separation of classes can also be seen in Fig. \ref{figure_class_separation}.
This experiment shows that the proposed QQE can be useful for separation and discrimination of classes either in the input space or embedding space. Discrimination of classes helps better classification of data as well as better representation of data in terms of classes.

\subsection{Evaluating QQE for Histopathology Data}\label{section_histopathology}

Although the MNIST and ORL face datasets are also real-world datasets, we evaluated the proposed QQE on medical image data as another real dataset. Finding an informative embedding space for extracting features from medical images is useful for image search and finding similar tumorous image patches in hospital's archives \cite{kalra2020pan}; therefore, it can be used for automatic cancer diagnosis. We used the Colorectal Cancer (CRC) dataset \cite{kather2016multi} which contains ($150 \times 150$)-pixel image patches from colorectal histopathology whole slide images. 
This dataset contains eight tissue types which are background (empty), adipose tissue, mucosal glands, debris, immune cells (lymphoma), complex stroma, simple stroma, and tumor epithelium. Some examples of these tissue types are shown in Fig. \ref{figure_Histopathology}.
We applied QQE manifold embedding on this dataset where Siamese network \cite{schroff2015facenet} with the Fisher Discriminant Triplet (FDT) loss \cite{ghojogh2020fisher} and ResNet-18 backbone \cite{he2016deep} was employed for initialization. 
For this experiment, $\eta=0.1$ was found to be proper without oscillation. 
We set the embedding dimensionality to be 128 to show that QQE also works well on multi-dimensional embedding spaces in addition to two-dimensional spaces.

Figure \ref{figure_Histopathology} illustrates some iterations of applying QQE on the histopathology embedding. We used UMAP \cite{mcinnes2018umap} for 2D visualizations of 128-dimensional embeddings. 
The runtime for these experiments are reported in Table \ref{table_timing}. 
The KL-divergence, MMD, and HSIC of these experiments are reported in Table \ref{table_comparison_distributions}.
For unsupervised shape transformation, we used a multivariate Gaussian reference distribution. As shown in the figure, the UMAP of entire embedding becomes like Gaussian because the embedding is transformed to be Gaussian. 
For supervised shape transformation, we considered a multivariate Gaussian distribution for each tissue type. As expected, the UMAP visualization of embedding shows that each class has a Gaussian form eventually. 
We also performed experiment on supervised transformation to exact reference distribution. For this, we considered eight multivariate Gaussian distributions placed on a global circular pattern similar to what we had in Fig. \ref{figure_manifoldEmbedding_mnist}. The UMAP visualization of transformed embedding validates that the tissue embeddings have been placed on a global circular pattern using QQE. 

The distribution transformation and manifold embedding for histopathology data have various applications and usages. Although deep metric learning has extracted useful features for tissue types, the embedding of patches have not been separated completely. Both supervised exact transformation and supervised shape transformation can be used to separate the tissue types as desired in the embedding space. In the former, the relative locations of tissue types in the embedding space are also desired to be chosen by user. However, in the latter, the distribution of every tissue type is noticed without changing the relative locations of tissue types in the space. 
Therefore, QQE may be used for discriminating tumorous tissues from the normal tissues for better cancer diagnosis.
The unsupervised transformation can also be used to change the distribution of all tissue types together in the embedding space. These transformations and embeddings can be used in hospitals for several reasons. One possible reason may be that the classifier model, to which the embeddings of tissues are going to be fed, requires a specific distribution to work better. Another reason can be the request of doctors and specialists to analyze tissue types in specific distributions. For any reason which requires distribution transformation or manifold embedding with ability to choose the embedding distribution, QQE can be useful in practice.

\section{Conclusion and Future Directions}\label{section_conclusion}

In this paper, we proposed QQE for distribution transformation and manifold embedding. This method can be used for both transforming to the exact reference distribution or its shape. Both unsupervised and supervised versions of this method were also proposed. The proposed method was based on quantile-quantile plot which is usually used in visual statistical tests. 
Experiments were performed on synthetic data, facial images, digit images, and medical histopathology images. We showed that QQE can be used for transforming distribution of data to any desired simple or complicated distribution. The desired distribution can be a theoretical PDF/CDF or an available reference sample. Experiments showed QQE can also be used for modifying images to have a specific distribution or the distribution of another set of images. We also showed that QQE can be used for separation of classes in the input space or embedding space. Experiments on medical data demonstrated that QQE can be useful for practical purposes such as discriminating tumorous tissues from normal ones for better cancer diagnosis. 

There exist several possible future directions. The first future direction is to improve the time complexity of QQE.
Since the complexity of QQE is $\mathcal{O}(n^3)$, dealing with big data would be a challenge for this initial version.
Thus, the immediate future direction for research would be to develop a more sample-efficient approach including handling large datasets. Handling out-of-sample data is another possible future direction. 
Moreover, QQE uses the least squares problem which is not very robust. Because of this, especially if the moments of data and reference distribution differ significantly and we want to transform to the exact reference distribution, some jumps of some data points may happen at initial iterations. This results in later convergence of QQE. One may investigate high breakdown estimators for robust regression \cite{yohai1987high} to make QQE more robust and faster.

\appendices

\section{Proof of Proposition 1}\label{section_appendix_A}

Consider the first part of the cost function:
\begin{align*}
&\mathcal{L}_1 := \frac{1}{2}\, \sum_{i=1}^n \|\b{x}_{i} - \b{y}_{\sigma(i)}\|_2^2 = \frac{1}{2}\, \sum_{i=1}^n \sum_{l=1}^d (x_{i,l} - y_{\sigma(i),l})^2 \\
&\implies \frac{\partial \mathcal{L}_1}{\partial x_{i,l}} = (x_{i,l} - y_{\sigma(i),l}).
\end{align*}
Consider the second part of the cost function:
\begin{align*}
\mathcal{L}_2 := \frac{1}{2a} \sum_{i=1}^n \sum_{j \in \mathcal{N}_i} \frac{\big( d_x(i,j) - d_x^{0}(i,j) \big)^2}{d_x^{0}(i,j)}.
\end{align*}
By chain rule, $\partial \mathcal{L}_2 / \partial x_{i,l} = \partial \mathcal{L}_2 / \partial d_x(i,j) \times \partial d_x(i,j) / \partial x_{i,l}$. 
The first derivative is:
\begin{align*}
\frac{\partial \mathcal{L}_2}{\partial d_x(i,j)} = \frac{1}{a} \sum_{j \in \mathcal{N}_i} \frac{d_x(i,j) - d_x^{0}(i,j)}{d_x^{0}(i,j)},
\end{align*}
and using the chain rule, the second derivative is $\partial d_x(i,j) / \partial x_{i,l} = \partial d_x(i,j) / \partial d_x^2(i,j) \times \partial d_x^2(i,j) / \partial x_{i,l}$. 
We have:
\begin{align}
& \frac{\partial d_x(i,j)}{\partial d_x^2(i,j)} = 1 / \frac{\partial d_x^2(i,j)}{\partial d_x(i,j)} = 1 / (2\, d_x(i,j)). \nonumber \\
& d_x^2(i,j) = \|\b{x}_i - \b{x}_j\|_2^2 = \sum_{k=1}^p (x_{i,l} - x_{j,l})^2. \nonumber \\
& \frac{\partial d_x^2(i,j)}{\partial x_{i,l}} = 2\,(x_{i,l} - x_{j,l}), \nonumber \\
& \therefore ~~~~ \frac{\partial d_x(i,j)}{\partial x_{i,l}} = \frac{x_{i,l} - x_{j,l}}{d_x(i,j)}. \label{equation_Sammon_derivative_dY}
\end{align}
\begin{align*}
&\therefore ~~~ \frac{\partial \mathcal{L}_2}{\partial x_{i,l}} = \frac{1}{a} \sum_{j \in \mathcal{N}_i} \frac{d_x(i,j) - d_x^{0}(i,j)}{d_x(i,j)\, d_x^{0}(i,j)} (x_{i,l} - x_{j,l}).
\end{align*}
Considering both parts of the cost function, the gradient is as in the proposition. Q.E.D.

\section{Proof of Proposition 2}\label{section_appendix_B}

The second derivative is the derivative of the first derivative, i.e., Eq. (\ref{equation_QQE_cost_gradient}). Hence:
\begin{align*}
&\frac{\partial^2 \mathcal{L}}{\partial x_{i,l}^2} = 1 + \frac{\lambda}{a} \sum_{j \in \mathcal{N}_i} \frac{\partial}{\partial x_{i,l}} \Big( \frac{d_x(i,j) - d_x^{0}(i,j)}{d_x(i,j)\, d_x^{0}(i,j)} (x_{i,l} - x_{j,l}) \Big).
\end{align*}
\begin{align*}
&\frac{\partial}{\partial x_{i,l}} \Big( \frac{d_x(i,j) - d_x^{0}(i,j)}{d_x(i,j)\, d_x^{0}(i,j)} (x_{i,l} - x_{j,l}) \Big) \\
&~~~~~~~~ = (x_{i,l} - x_{j,l}) \frac{\partial}{\partial x_{i,l}} \Big( \frac{d_x(i,j) - d_x^{0}(i,j)}{d_x(i,j)\, d_x^{0}(i,j)} \Big) \\
&~~~~~~~~~~~ + \frac{d_x(i,j) - d_x^{0}(i,j)}{d_x(i,j)\, d_x^{0}(i,j)} \underbrace{\frac{\partial}{\partial x_{i,l}} (x_{i,l} - x_{j,l})}_{=1}.
\end{align*}
\begin{align*}
&\frac{\partial}{\partial x_{i,l}} \Big( \frac{d_x(i,j) - d_x^{0}(i,j)}{d_x(i,j)\, d_x^{0}(i,j)} \Big) = \frac{1}{d_x^{0}(i,j)} \frac{\partial}{\partial x_{i,l}} \Big( 1 - \frac{d_x^{0}(i,j)}{d_x(i,j)} \Big) \\
&~~~~~ = \frac{1}{d_x^{0}(i,j)} \underbrace{\frac{\partial}{\partial x_{i,l}} (1)}_{=0} - \underbrace{\frac{d_x^{0}(i,j)}{d_x^{0}(i,j)}}_{=1} \frac{\partial}{\partial x_{i,l}} \Big( \frac{1}{d_x(i,j)}\Big) \\
&~~~~~ = \frac{1}{d_x^2(i,j)} \frac{\partial}{\partial x_{i,l}} ( d_x(i,j) ) \overset{(\ref{equation_Sammon_derivative_dY})}{=} \frac{(x_{i,l} - x_{j,l})}{d_x^3(i,j)}.
\end{align*}
Putting all parts of derivative together gives the second derivative. Q.E.D.

\ifCLASSOPTIONcaptionsoff
  \newpage
\fi



%



\bibliographystyle{IEEEtran}
\bibliography{references}

\begin{thebibliography}{10}
\providecommand{\url}[1]{#1}
\csname url@samestyle\endcsname
\providecommand{\newblock}{\relax}
\providecommand{\bibinfo}[2]{#2}
\providecommand{\BIBentrySTDinterwordspacing}{\spaceskip=0pt\relax}
\providecommand{\BIBentryALTinterwordstretchfactor}{4}
\providecommand{\BIBentryALTinterwordspacing}{\spaceskip=\fontdimen2\font plus
\BIBentryALTinterwordstretchfactor\fontdimen3\font minus
  \fontdimen4\font\relax}
\providecommand{\BIBforeignlanguage}[2]{{%
\expandafter\ifx\csname l@#1\endcsname\relax
\typeout{** WARNING: IEEEtran.bst: No hyphenation pattern has been}%
\typeout{** loaded for the language `#1'. Using the pattern for}%
\typeout{** the default language instead.}%
\else
\language=\csname l@#1\endcsname
\fi
#2}}
\providecommand{\BIBdecl}{\relax}
\BIBdecl

\bibitem{saul2003think}
L.~K. Saul and S.~T. Roweis, ``Think globally, fit locally: unsupervised
  learning of low dimensional manifolds,'' \emph{Journal of machine learning
  research}, vol.~4, no. Jun, pp. 119--155, 2003.

\bibitem{gretton2007kernel}
A.~Gretton, K.~Borgwardt, M.~Rasch, B.~Sch{\"o}lkopf, and A.~J. Smola, ``A
  kernel method for the two-sample-problem,'' in \emph{Advances in neural
  information processing systems}, 2007, pp. 513--520.

\bibitem{gretton2012kernel}
A.~Gretton, K.~M. Borgwardt, M.~J. Rasch, B.~Sch{\"o}lkopf, and A.~Smola, ``A
  kernel two-sample test,'' \emph{Journal of Machine Learning Research},
  vol.~13, no. Mar, pp. 723--773, 2012.

\bibitem{kullback1951information}
S.~Kullback and R.~A. Leibler, ``On information and sufficiency,'' \emph{The
  annals of mathematical statistics}, vol.~22, no.~1, pp. 79--86, 1951.

\bibitem{goodfellow2016deep}
I.~Goodfellow, Y.~Bengio, and A.~Courville, \emph{Deep learning}.\hskip 1em
  plus 0.5em minus 0.4em\relax MIT press Cambridge, 2016.

\bibitem{hofmann2008kernel}
T.~Hofmann, B.~Sch{\"o}lkopf, and A.~J. Smola, ``Kernel methods in machine
  learning,'' \emph{The annals of statistics}, pp. 1171--1220, 2008.

\bibitem{scholkopf2001kernel}
B.~Sch{\"o}lkopf, ``The kernel trick for distances,'' \emph{Advances in neural
  information processing systems}, pp. 301--307, 2001.

\bibitem{li2015generative}
Y.~Li, K.~Swersky, and R.~Zemel, ``Generative moment matching networks,'' in
  \emph{International Conference on Machine Learning}, 2015, pp. 1718--1727.

\bibitem{ren2016conditional}
Y.~Ren, J.~Zhu, J.~Li, and Y.~Luo, ``Conditional generative moment-matching
  networks,'' in \emph{Advances in Neural Information Processing Systems},
  2016, pp. 2928--2936.

\bibitem{gretton2005measuring}
A.~Gretton, O.~Bousquet, A.~Smola, and B.~Sch{\"o}lkopf, ``Measuring
  statistical dependence with {Hilbert}-{Schmidt} norms,'' in
  \emph{International conference on algorithmic learning theory}.\hskip 1em
  plus 0.5em minus 0.4em\relax Springer, 2005, pp. 63--77.

\bibitem{barshan2011supervised}
E.~Barshan, A.~Ghodsi, Z.~Azimifar, and M.~Z. Jahromi, ``Supervised principal
  component analysis: Visualization, classification and regression on subspaces
  and submanifolds,'' \emph{Pattern Recognition}, vol.~44, no.~7, pp.
  1357--1371, 2011.

\bibitem{alipanahi2011guided}
B.~Alipanahi and A.~Ghodsi, ``Guided locally linear embedding,'' \emph{Pattern
  Recognition Letters}, vol.~32, no.~7, pp. 1029--1035, 2011.

\bibitem{parzen1979nonparametric}
E.~Parzen, ``Nonparametric statistical data modeling,'' \emph{Journal of the
  American statistical association}, vol.~74, no. 365, pp. 105--121, 1979.

\bibitem{hyndman1996sample}
R.~J. Hyndman and Y.~Fan, ``Sample quantiles in statistical packages,''
  \emph{The American Statistician}, vol.~50, no.~4, pp. 361--365, 1996.

\bibitem{galton1885application}
F.~Galton, P.~Foxwell, J.~B. Martin, F.~Walker, P.~Marshall, G.~Longstaff, and
  H.~K{\"o}r{\"o}si, ``The application of a graphic method to fallible measures
  [with discussion],'' \emph{Journal of the Statistical Society of London}, pp.
  262--271, 1885.

\bibitem{chaudhuri1996geometric}
P.~Chaudhuri, ``On a geometric notion of quantiles for multivariate data,''
  \emph{Journal of the American Statistical Association}, vol.~91, no. 434, pp.
  862--872, 1996.

\bibitem{loy2016variations}
A.~Loy, L.~Follett, and H.~Hofmann, ``Variations of {Q}--{Q} plots: The power
  of our eyes!'' \emph{The American Statistician}, vol.~70, no.~2, pp.
  202--214, 2016.

\bibitem{oldford2016self}
R.~W. Oldford, ``Self-calibrating quantile--quantile plots,'' \emph{The
  American Statistician}, vol.~70, no.~1, pp. 74--90, 2016.

\bibitem{easton1990multivariate}
G.~S. Easton and R.~E. McCulloch, ``A multivariate generalization of
  quantile-quantile plots,'' \emph{Journal of the American Statistical
  Association}, vol.~85, no. 410, pp. 376--386, 1990.

\bibitem{hinton2003stochastic}
G.~E. Hinton and S.~T. Roweis, ``Stochastic neighbor embedding,'' in
  \emph{Advances in neural information processing systems}, 2003, pp. 857--864.

\bibitem{maaten2008visualizing}
L.~V.~D. Maaten and G.~Hinton, ``Visualizing data using {t-SNE},''
  \emph{Journal of machine learning research}, vol.~9, no. Nov, pp. 2579--2605,
  2008.

\bibitem{van2009learning}
L.~Van Der~Maaten, ``Learning a parametric embedding by preserving local
  structure,'' in \emph{Artificial Intelligence and Statistics}, 2009, pp.
  384--391.

\bibitem{ghojogh2019unsupervised}
B.~Ghojogh and M.~Crowley, ``Unsupervised and supervised principal component
  analysis: Tutorial,'' \emph{arXiv preprint arXiv:1906.03148}, 2019.

\bibitem{cox2008multidimensional}
M.~A. Cox and T.~F. Cox, ``Multidimensional scaling,'' in \emph{Handbook of
  data visualization}.\hskip 1em plus 0.5em minus 0.4em\relax Springer, 2008,
  pp. 315--347.

\bibitem{sammon1969nonlinear}
J.~W. Sammon, ``A nonlinear mapping for data structure analysis,'' \emph{IEEE
  Transactions on computers}, vol. 100, no.~5, pp. 401--409, 1969.

\bibitem{ghojogh2019fisher}
B.~Ghojogh, F.~Karray, and M.~Crowley, ``Fisher and kernel {Fisher}
  discriminant analysis: Tutorial,'' \emph{arXiv preprint arXiv:1906.09436},
  2019.

\bibitem{tenenbaum2000global}
J.~B. Tenenbaum, V.~De~Silva, and J.~C. Langford, ``A global geometric
  framework for nonlinear dimensionality reduction,'' \emph{Science}, vol. 290,
  no. 5500, pp. 2319--2323, 2000.

\bibitem{roweis2000nonlinear}
S.~T. Roweis and L.~K. Saul, ``Nonlinear dimensionality reduction by locally
  linear embedding,'' \emph{Science}, vol. 290, no. 5500, pp. 2323--2326, 2000.

\bibitem{he2016deep}
K.~He, X.~Zhang, S.~Ren, and J.~Sun, ``Deep residual learning for image
  recognition,'' in \emph{Proceedings of the IEEE conference on computer vision
  and pattern recognition}, 2016, pp. 770--778.

\bibitem{schroff2015facenet}
F.~Schroff, D.~Kalenichenko, and J.~Philbin, ``Facenet: A unified embedding for
  face recognition and clustering,'' in \emph{Proceedings of the IEEE
  conference on computer vision and pattern recognition}, 2015, pp. 815--823.

\bibitem{ferguson1967mathematical}
T.~S. Ferguson, \emph{Mathematical statistics: A decision theoretic
  approach}.\hskip 1em plus 0.5em minus 0.4em\relax Academic press, 1967.

\bibitem{serfling2004nonparametric}
R.~Serfling, ``Nonparametric multivariate descriptive measures based on spatial
  quantiles,'' \emph{Journal of statistical Planning and Inference}, vol. 123,
  no.~2, pp. 259--278, 2004.

\bibitem{leon1984another}
H.~Leon~Harter, ``Another look at plotting positions,'' \emph{Communications in
  Statistics-Theory and Methods}, vol.~13, no.~13, pp. 1613--1633, 1984.

\bibitem{allen1914storage}
H.~Allen, ``The storage to be provided in impounding reservoirs for municipal
  water supply,'' \emph{Transactions of the American society of civil
  engineers}, vol.~77, pp. 1539--1669, 1914.

\bibitem{reiss2012approximate}
R.-D. Reiss, \emph{Approximate distributions of order statistics: with
  applications to nonparametric statistics}.\hskip 1em plus 0.5em minus
  0.4em\relax Springer science \& business media, 2012.

\bibitem{galton1874proposed}
F.~Galton, ``On a proposed statistical scale,'' \emph{Nature}, vol.~9, no. 227,
  p. 342, 1874.

\bibitem{mottonen1995multivariate}
J.~M{\"o}tt{\"o}nen and H.~Oja, ``Multivariate spatial sign and rank methods,''
  \emph{Journaltitle of Nonparametric Statistics}, vol.~5, no.~2, pp. 201--213,
  1995.

\bibitem{marden2004positions}
J.~I. Marden, ``Positions and {QQ} plots,'' \emph{Statistical Science},
  vol.~19, no.~4, pp. 606--614, 2004.

\bibitem{dhar2014comparison}
S.~S. Dhar, B.~Chakraborty, and P.~Chaudhuri, ``Comparison of multivariate
  distributions using quantile--quantile plots and related tests,''
  \emph{Bernoulli}, vol.~20, no.~3, pp. 1484--1506, 2014.

\bibitem{kuhn1955hungarian}
H.~W. Kuhn, ``The {Hungarian} method for the assignment problem,'' \emph{Naval
  research logistics quarterly}, vol.~2, no. 1-2, pp. 83--97, 1955.

\bibitem{hastie2009elements}
T.~Hastie, R.~Tibshirani, and J.~Friedman, \emph{The elements of statistical
  learning: data mining, inference, and prediction}.\hskip 1em plus 0.5em minus
  0.4em\relax Springer Science \& Business Media, 2009.

\bibitem{lee2007nonlinear}
J.~A. Lee and M.~Verleysen, \emph{Nonlinear dimensionality reduction}.\hskip
  1em plus 0.5em minus 0.4em\relax Springer Science \& Business Media, 2007.

\bibitem{nocedal2006numerical}
J.~Nocedal and S.~Wright, \emph{Numerical optimization}.\hskip 1em plus 0.5em
  minus 0.4em\relax Springer Science \& Business Media, 2006.

\bibitem{edmonds1972theoretical}
J.~Edmonds and R.~M. Karp, ``Theoretical improvements in algorithmic efficiency
  for network flow problems,'' \emph{Journal of the ACM (JACM)}, vol.~19,
  no.~2, pp. 248--264, 1972.

\bibitem{scott2015multivariate}
D.~W. Scott, \emph{Multivariate density estimation: theory, practice, and
  visualization}.\hskip 1em plus 0.5em minus 0.4em\relax John Wiley \& Sons,
  2015.

\bibitem{gubner2006probability}
J.~A. Gubner, \emph{Probability and random processes for electrical and
  computer engineers}.\hskip 1em plus 0.5em minus 0.4em\relax Cambridge
  University Press, 2006.

\bibitem{samaria1994parameterisation}
F.~S. Samaria and A.~C. Harter, ``Parameterisation of a stochastic model for
  human face identification,'' in \emph{Proceedings of 1994 IEEE workshop on
  applications of computer vision}.\hskip 1em plus 0.5em minus 0.4em\relax
  IEEE, 1994, pp. 138--142.

\bibitem{web_ORL_dataset}
A.~L. Cambridge, ``{AT}\&{T} laboratories {Cambridge},''
  \url{http://cam-orl.co.uk/facedatabase.html}, accessed: 2020-01-01.

\bibitem{ghojogh2020sampling}
B.~Ghojogh, H.~Nekoei, A.~Ghojogh, F.~Karray, and M.~Crowley, ``Sampling
  algorithms, from survey sampling to {Monte Carlo} methods: Tutorial and
  literature review,'' \emph{arXiv preprint arXiv:2011.00901}, 2020.

\bibitem{lecun1998gradient}
Y.~LeCun, L.~Bottou, Y.~Bengio, and P.~Haffner, ``Gradient-based learning
  applied to document recognition,'' \emph{Proceedings of the IEEE}, vol.~86,
  no.~11, pp. 2278--2324, 1998.

\bibitem{mcinnes2018umap}
L.~McInnes, J.~Healy, and J.~Melville, ``{UMAP}: Uniform manifold approximation
  and projection for dimension reduction,'' \emph{arXiv preprint
  arXiv:1802.03426}, 2018.

\bibitem{qian2019softtriple}
Q.~Qian, L.~Shang, B.~Sun, J.~Hu, H.~Li, and R.~Jin, ``Softtriple loss: Deep
  metric learning without triplet sampling,'' in \emph{Proceedings of the
  IEEE/CVF International Conference on Computer Vision}, 2019, pp. 6450--6458.

\bibitem{nguyen2020improved}
B.~Nguyen and B.~De~Baets, ``Improved deep embedding learning based on
  stochastic symmetric triplet loss and local sampling,''
  \emph{Neurocomputing}, vol. 402, pp. 209--219, 2020.

\bibitem{sikaroudi2020batch}
M.~Sikaroudi, B.~Ghojogh, F.~Karray, M.~Crowley, and H.~R. Tizhoosh,
  ``Batch-incremental triplet sampling for training triplet networks using
  {Bayesian} updating theorem,'' in \emph{Proceedings of the IEEE International
  Conference on Pattern Recognition (ICPR)}.\hskip 1em plus 0.5em minus
  0.4em\relax IEEE, 2020.

\bibitem{kalra2020pan}
S.~Kalra, H.~R. Tizhoosh, S.~Shah, C.~Choi, S.~Damaskinos, A.~Safarpoor,
  S.~Shafiei, M.~Babaie, P.~Diamandis, C.~J. Campbell \emph{et~al.},
  ``Pan-cancer diagnostic consensus through searching archival histopathology
  images using artificial intelligence,'' \emph{NPJ digital medicine}, vol.~3,
  no.~1, pp. 1--15, 2020.

\bibitem{kather2016multi}
J.~N. Kather, C.-A. Weis, F.~Bianconi, S.~M. Melchers, L.~R. Schad, T.~Gaiser,
  A.~Marx, and F.~G. Z{\"o}llner, ``Multi-class texture analysis in colorectal
  cancer histology,'' \emph{Scientific reports}, vol.~6, no.~1, pp. 1--11,
  2016.

\bibitem{ghojogh2020fisher}
B.~Ghojogh, M.~Sikaroudi, S.~Shafiei, H.~R. Tizhoosh, F.~Karray, and
  M.~Crowley, ``Fisher discriminant triplet and contrastive losses for training
  siamese networks,'' in \emph{2020 International Joint Conference on Neural
  Networks (IJCNN)}.\hskip 1em plus 0.5em minus 0.4em\relax IEEE, 2020, pp.
  1--7.

\bibitem{yohai1987high}
V.~J. Yohai, ``High breakdown-point and high efficiency robust estimates for
  regression,'' \emph{The Annals of Statistics}, pp. 642--656, 1987.

\end{thebibliography}

\end{document}